\tikzset{
    ncbar angle/.initial=90,
    ncbar/.style={
        to path=(\tikztostart)
        -- ($(\tikztostart)!#1!\pgfkeysvalueof{/tikz/ncbar angle}:(\tikztotarget)$)
        -- ($(\tikztotarget)!($(\tikztostart)!#1!\pgfkeysvalueof{/tikz/ncbar angle}:(\tikztotarget)$)!\pgfkeysvalueof{/tikz/ncbar angle}:(\tikztostart)$)
        -- (\tikztotarget)
    },
    ncbar/.default=0.5cm,
}
\tikzset{round left paren/.style={ncbar=0.5cm,out=120,in=-120}}
\algnewcommand{\Input}[1]{%
  \State \textbf{Input:}
  \Statex \hspace*{\algorithmicindent}\parbox[t]{.8\linewidth}{\raggedright #1}
}
\algnewcommand{\Initialize}[1]{%
  \State \textbf{Initialize:}
  \Statex \hspace*{\algorithmicindent}\parbox[t]{.8\linewidth}{\raggedright #1}
}
\algnewcommand{\Inpt}[1]{%
  \State \textbf{Input & Initialize:}
  \Statex \hspace*{\algorithmicindent}\parbox[t]{.8\linewidth}{\raggedright #1}
}
\newcommand\Algphase[1]{%
\vspace*{-.7\baselineskip}\Statex\hspace*{\dimexpr-\algorithmicindent-2pt\relax}\rule{\textwidth}{0.4pt}%
\Statex\hspace*{-\algorithmicindent}\textbf{#1}%
\vspace*{-.7\baselineskip}\Statex\hspace*{\dimexpr-\algorithmicindent-2pt\relax}\rule{\textwidth}{0.4pt}%
}
\let\@fnsymbol\@arabic
\newcommand{\Kk}{\mathcal{K}}
\newcommand{\Nn}{\mathcal{N}}
\newcommand{\RR}{\mathbb{R}}
\newcommand{\ZZ}{\mathbb{Z}}
\newcommand{\Hom}{\operatorname{Hom}}
\newcommand{\etal}{\textit{et al.}}
\begin{document}

\title{Line Graph Vietoris-Rips Persistence Diagram for Topological Graph Representation Learning}

\author{\name Jaesun Shin \email j1991.shin@samsung.com \\
       \addr Samsung SDS
       \AND
       \name Eunjoo Jeon \email ej85.jeon@samsung.com \\
       \addr Samsung SDS
       \AND
       \name Taewon Cho \email taewon08.cho@samsung.com \\
       \addr Samsung SDS
       \AND
       \name Namkyeong Cho\thanks{Work done while working at Samsung SDS} \email namkyeong.cho@gmail.com \\
       \addr Center for Mathematical Machine Learning and its Applications(CM2LA), Department of Mathematics POSTECH
       \AND
       \name Youngjune Gwon \email gyj.gwon@samsung.com \\
       \addr Samsung SDS}

\editor{Sayan Mukherjee}

\maketitle

\begin{abstract}
While message passing graph neural networks result in informative node embeddings, they may suffer from describing the topological properties of graphs. To this end, node filtration has been widely used as an attempt to obtain the topological information of a graph using persistence diagrams. However, these attempts have faced the problem of losing node embedding information, which in turn prevents them from providing a more expressive graph representation. To tackle this issue, we shift our focus to edge filtration and introduce a novel edge filtration-based persistence diagram, named Topological Edge Diagram (TED), which is mathematically proven to preserve node embedding information as well as contain additional topological information. To implement TED, we propose a neural network based algorithm, named Line Graph Vietoris-Rips (LGVR) Persistence Diagram, that extracts edge information by transforming a graph into its line graph. Through LGVR, we propose two model frameworks that can be applied to any message passing GNNs, and prove that they are strictly more powerful than Weisfeiler-Lehman type colorings. Finally we empirically validate superior performance of our models on several graph classification and regression benchmarks.
\end{abstract}

\begin{keywords}
  Graph Neural Network, Persistence Diagram, Topological Data Analysis, Weisfeiler-Lehman Test, Vietoris-Rips Filtration
\end{keywords}

\section{Introduction}

Recently, message passing graph neural networks and its variants have emerged as an effective method to learn graph representations (\cite{123gnn, dagcn, dsgcn, ccn, graphnorm, graphsage, glimer, review}). Since message passing GNNs are designed to learn node representations, they can extract informative node embeddings by capturing localized information. However, they can hardly capture topological information of the entire graph (\cite{substructure, gfl, breaking}). In this vein, various topological methods have been proposed (\cite{dgcnn, diffpool, asap, gsn}). In particular, node filtration has been widely used to extract topological information of graphs using persistence diagrams (\cite{persistence}), showing superior performance on graph benchmarks (\cite{gfl,togl,pd1, pd2, pd3, pd4}). For example, \cite{gfl} extracted topological information from the persistence diagram of sublevel sets of a node filtration map over node features. Moreover, \cite{togl} proposed a multi-scale version of \cite{gfl} using $k$ node filtration maps. Theoretically, they claim that persistence diagrams based on node filtration map $f_N$ can provide stronger expressivity than the Weisfeiler-Lehman (WL) test (\cite{wl-test}), assuming that the input node features of $f_N$ are all different for each node \cite[Theorem 1, 2]{togl}. However, even the WL test, which is known to be at least as powerful as message passing GNNs (\cite{gin}), cannot assign different features to different nodes. Furthermore, in cases where this assumption does not hold, there exist graphs that cannot be distinguished by node filtration-based persistence diagrams but can be distinguished by the WL test (Figure~\ref{fig:node_edge_diff_2}). In other words, node filtration-based persistence diagram cannot provide more powerful GNNs than the WL test in general.

\begin{figure}[t]
    \centering
    \includegraphics[width=1.0\textwidth]{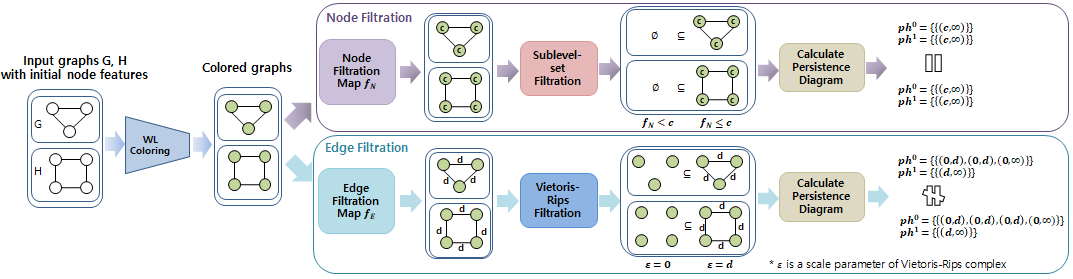}
    \caption{An overview of node filtration and edge filtration of WL coloring. For graphs $G$ and $H$ distinguishable by WL test, the edge filtration-based persistence diagram can also distinguish them while the node filtration-based one cannot.}
    \label{fig:node_edge_diff_2}
\end{figure}

We found out that such topological methods suffer from a loss of coloring information, due to the nature of node filtration, which extracts induced subgraphs (Figure~\ref{fig:node_edge_diff_1}). To address this, we shift our attention to edges. If we can capture information of two nodes in a single edge, we can extract more fruitful information from edge filtration by directly controlling the connectivity of graphs through edges while including all the node information. With this goal in mind, we propose a novel edge filtration-based persistence diagram, \emph{Topologcial Edge Diagram (TED)}, which, to the best of our knowledge, is the first approach using edge-based filtration in topological graph representation learning. Specifically, TED is defined as a persistence diagram of Vietoris-Rips filtration (\cite{hatcher, persistence}), which is the well-known algebraic topological reconstruction technique, whose a set of graph nodes is represented as a point cloud and edge information as distances between points. In contrast to node filtration, we prove that TED can preserve the expressive power of an arbitrary node coloring (Lemma \ref{lem:core}). We further prove that TED can even increase the expressive powers of WL colorings thanks to its additional topological information (Theorem \ref{thm:wl_vr_comparison}). 

Next, we propose a novel neural-network-based algorithm, called \emph{Line Graph Vietoris-Rips (LGVR) Persistence Diagram}, to implement our theoretical foundation. A key challenge is to assign unique features to edges that consist of different node features. To tackle this problem, we construct a map $t_{\phi}$ (Section~\ref{subsection:construction}) that transforms a colored graph into a colored line graph (Definition \ref{defn:colored_line_graph}) through a neural network, which ensures the uniqueness of edge features. Thanks to this, we prove that LGVR has the same expressivity as TED and further analyze its theoretical expressivity. Through LGVR, we propose two types of topological model frameworks, {\rm $\mathcal{C}$-LGVR} and {\rm $\mathcal{C}$-LVGR}$^{+}$, that can be applied to any message passing GNNs $\mathcal{C}$ (Section~\ref{subsection:model_architectures}). From a theoretical perspective, we analyze their theoretical expressivity when $\mathcal{C}$ is either GIN (\cite{gin}) or PPGN (\cite{ppgn}), and prove that our models are strictly more powerful than $\mathcal{C}$ (Corollary~\ref{cor:model}). 

In addition to proposing a theoretical framework, we performed experiments on several real-world datasets, including 7 classification and 12 regression tasks related to bioinformatics, social networks, and chemical compounds, to substantiate the superiority of our topological models (Section~\ref{section:experiments}). We focus on three aspects. First, we test whether our models, {\rm $\mathcal{C}$-LGVR} and {\rm $\mathcal{C}$-LVGR}$^{+}$, which theoretically have more powerful representational power than the message passing GNN $\mathcal{C}$, also show better performances empirically. Our findings demonstrate that our topological models outperform the $\mathcal{C}$ by effectively comprehending diverse graph structures (Table~\ref{tab:classification}, \ref{tab:max_perf_classification} and \ref{tab:regression}). Next, we conducted comparative experiments for our edge filtration-based methodology and the existing node filtration-based approach (\cite{gin}) to experimentally evaluate the representational capabilities between topological methodologies. In this vein, we compare their performances on both classification and regression tasks, and found that our approach shows superior performance compared to node filtration-based approach, which validates the superiority of edge filtration-based approach over node filtration-based one (Table~\ref{tab:classification}, Figure~\ref{fig:perf_improv} and Table~\ref{tab:regression}). Finally, we observed that the performances of GNNs vary significantly based on data splits, even for the same dataset (Table~\ref{tab:classification_stdev}). We speculate that this variation is due to the insufficient utilization of graph information in the training data, resulting from the limited representational powers of existing GNNs. Therefore, we measured the standard deviations of performances based on data split for the message passing GNN $\mathcal{C}$, node filtration-based methodology, and our models. As a result, our models exhibited the lowest standard deviation compared to other methods (Figure~\ref{fig:stdev_fig}). From these results, we confirmed that our topological models, which can well reflect various features in graph representations due to strong expressive powers, enable more stable learning compared to the $\mathcal{C}$ and node filtration-based approach.

The main contributions of this paper can be summarized as follows:
\begin{enumerate}
    \item We introduce a novel edge filtration-based persistence diagram, called \emph{Topological Edge Diagram (TED)}. TED is mathematically proven to preserve node embedding information as well as provide additional topological information. As far as we know, our approach is the first to leverage edge-based filtration in topological graph representation learning.
    \item We propose a novel neural-network-based algorithm, called \emph{Line Graph Vietoris-Rips (LGVR) Persistence Diagram}, to implement our theoretical foundation. We prove that LGVR has the same expressivity as TED and further analyze its theoretical expressivity.
    \item By applying the LGVR, we propose two model frameworks: {\rm $\mathcal{C}$-LGVR} and {\rm $\mathcal{C}$-LGVR}$^{+}$ that can be applied to any message passing GNN $\mathcal{C}$, and theoretically demonstrate the strong expressive powers of our models. 
    \item We perform experiments on several real-world datasets, including classification and regression tasks related to bioinformatics, social networks, and chemical compounds. Through these experiments, we demonstrate that our edge filtration-based models not only have strong experimental representational capabilities but also enable stable learning regardless of data split by encapsulating various graph properties in graph representations. 
\end{enumerate}

This paper is organized as follows. Section~\ref{section:preliminaries} provides a brief explanation of the prerequisite knowledge, including the Weisfeiler-Lehman test, GNN, and basics of persistence homology. We summarize some notations and conventions used throughout this paper in Section~\ref{subsection:notations}. In Section~\ref{section:theoretical_framework}, we introduce our novel edge filtration-based persistence diagram, which we call \emph{Topological Edge Diagram (TED)}, and analyze its theoretical expressive power. In Section~\ref{section:lgvr}, we propose a novel neural-network-based algorithm, named \emph{Line Graph Vietoris-Rips (LGVR) Persistence Diagram}, to implement TED, and analyze its theoretical expressive power. In Section~\ref{section:model_framework}, we propose two model frameworks applying LGVR. Specifically, we first propose a simple mathematical technique that integrates the expressive powers of both coloring information and topological information induced by LGVR. Depending on the application of this integration technique, we propose two topological model frameworks, {\rm $\mathcal{C}$-LGVR} and {\rm $\mathcal{C}$-LVGR}$^{+}$, and analyze their theoretical expressive powers. In Section~\ref{section:experiments}, we conduct experiments by focusing on experimental verification of our models. Section~\ref{section:conclusion} concludes the paper with some future work directions.

\section{Preliminaries} \label{section:preliminaries}

In this section, we briefly recall Weisfeiler-Lehman test, graph neural network, and some basic knowledge in algebraic topology related to persistence homology.

\subsection{(Higher) Weisfeiler-Lehman Test} \label{subsection:higher}

The \emph{Weisfeiler-Lehman} test (WL test) is an algorithm which determines the graph isomorphism problem according to the histogram of colors on the vertices where the colors are iteratively aggregated by those of the neighborhood vertices. Precisely, for a finite graph $G=(V,E)$ with the initial coloring $X_0: V \rightarrow \mathbb{Z}, v \mapsto 1$, the $n$-th coloring $X_n\in\Hom(V,\ZZ)$ is given by $X_n(v):=\sum_{u\in\Nn(v)}X_{n-1}(u)$, where $\Nn(v)$ is a neighbor of $v$ in $G$. Then two graphs $G$ and $G'$ are isomorphic only if their associated $n$-th colorings $X_n$ and $X_n'$ coincide for all $n\geq 1$. To distinguish two graphs of the same order $m=|V|$, it suffices to terminate the algorithm in the $n$-th iteration for some $n=O(m^k)$ \cite{Douglass11}, and it is known that the algorithm is effective for a broad class of graphs \cite{Babai-Kucera79}.

Higher-order variant of WL test, named $k$-WL, has been proposed to improve expressive power and apply color refinements iteratively on vertex tuples instead of single vertices. For a given $G=(V,E)$, the initial coloring is defined using the isomorphism type of each $k$-tuple of $V$. Specifically, two $k$-tuples $(v_{1}, \dots, v_{k})$ and $(w_{1}, \dots, w_{k})$ in $V^{k}$ are assigned the same initial color if and only if for all $i, j \in \{1, \dots, k\}$, (1) $v_{i}=v_{j}$ if and only if $w_{i}=w_{j}$, and (2) $v_{i}$ is adjacent to $v_{j}$ if and only if $w_{i}$ is adjacent to $w_{j}$.

Given this initial coloring, it refines the colorings of $k$-tuples iteratively (similar to the WL test) until the histogram of coloring does not change further. In $k$-WL, the neighborhood of $\mathbf{\nu}=(v_{1},\dots,v_{k})\in V^{k}$ is set to $N_{j}(\mathbf{\nu})=\{(v_1,\dots,v_{j-1}, u,v_{j+1},\dots,v_k)\,|\,u\in V\}$, where $j \in \{1, \dots, k\}$ and $u\in V$. Then the coloring update rules are:
\begin{equation*}
X_{t}(\mathbf{\nu})=HASH(X_{t-1}(\mathbf{\nu}), \, \mathcal{N}(\mathbf{\nu}, \, t-1)),
\end{equation*}
where $\mathcal{N}(\mathbf{\nu}, \, t-1) = \{\{X_{t-1}(\mathbf{\nu}') \,|\, \mathbf{\nu}' \in N_{j}(\mathbf{\nu})\}\} \,|\, j \in \{1, \dots, k\}$. We refer to (\cite{cai,wl-fwl,123gnn}) for several results related to WL and $k$-WL.

\subsection{Graph Neural Network}

Graph neural network (GNN) computes the structure of a graph and its node features to learn a representation vector $h_{v}$ of a vertex $v$. Modern GNNs use spatial methods based on a message passing scheme (\cite{gcn}). In short, the learning process of GNNs iteratively updates the node features from those of the neighboring nodes, which formally associates $(1)$ the representation vectors $h_{v}^{(k)}\in\RR^d$, and $(2)$ the aggregation procedure $h_{v}^{(k)}=\varphi^{k}(h_{v}^{(k-1)}, f^k(\{\{h_{u}^{(k-1)}: u \in \{w \in V \text{ }|\text{ } (v, w) \in E\}\}\}))$, given by an \emph{aggregation function} $f^k$ that operates on multisets and a \emph{combine function} $\varphi^k$. To extract the graph-level representation $h_G$, various pooling methods, also called readout operations, have been proposed to summarize the representation vectors $h_v^{(k)}$ of nodes $v\in V$ (\cite{dgcnn,diffpool,asap,gfl}). In terms of the expressive power of $h_{G}$, it is proven in \cite{gin} that GNNs are as powerful as the WL test under the assumption on the injectivity of $f^{k}$ and $\varphi^{k}$ for each $k$.

\subsection{Simplicial Complexes, Persistence Homology, and Vietoris-Rips Filtration} \label{subsection:persistence_homology}

In this section, we will introduce some basic knowledge in algebraic topology. Readers who are already familiar with algebraic topology may skip this section without hesitation. 

\subsubsection{Simplicial Complexes}

In this subsection, we recall some basics of simplicial complexes. Briefly speaking, a simplex is the simplest geometric object, such as points, line segments, triangles, and their higher-dimensional analogs. Moreover, a simplicial complex is a set of simplices satisfying certain rules. Both are central topological concepts in algebraic topology in order to understand the shape and structure of complex spaces. Formal definitions of both objects are as follows:

\begin{definition}
A \emph{$k$-simplex} is a $k$-dimensional polytope which is the convex hull of affinely independent $k+1$ vertices. Moreover, a \emph{simplicial complex} $\mathcal{K}$ is a set of simplices satisfying the following: (1) every face of a simplex in $\mathcal{K}$ is also in $\mathcal{K}$, and (2) for any $\sigma_1, \sigma_2 \in \mathcal{K}$ such that $\sigma_1 \cap \sigma_2 \neq \emptyset$, $\sigma_1 \cap \sigma_2$ is a face of both $\sigma_1$ and $\sigma_2$. Finally, the \emph{$d$-skeleton} of a simplicial complex $\mathcal{K}$ is the simplicial complex consisting of the set of all simplices in $\mathcal{K}$ of dimension $d$ or less. 
\end{definition}

Finally, a morphism, which is a map preserving structures of mathematical objects, between simplicial complexes can be defined as follows. It is easy to see that simplicial maps can be seen as an extension of graph maps from the perspective of simplicial complexes. 

\begin{definition}
    Let $\mathcal{K}$ and $\mathcal{L}$ be two simplicial complexes. A \emph{simplicial map} $f: \mathcal{K} \rightarrow \mathcal{L}$ is a function from $0$-simplices of $\mathcal{K}$ to $0$-simplices of $\mathcal{L}$ that maps every simplex in $\mathcal{K}$ to a simplex in $\mathcal{L}$. Moreover, a simplicial map $f:\mathcal{K} \rightarrow \mathcal{L}$ is called a \emph{simplicial isomorphism} if it is bijective and its inverse is also a simplicial map. If there exists a simplicial isomorphism between $\mathcal{K}$ and $\mathcal{L}$, we call $\mathcal{K}$ and $\mathcal{L}$ are isomorphic and denote it by $\mathcal{K} \cong \mathcal{L}$
\end{definition}

\subsubsection{Homology and Betti Numbers}

Homology is an abstract way of associating topological or algebraic spaces with a sequence of algebraic objects. In algebraic topology, this allows to encode the topological information of a space through a chain of vector spaces and linear maps. We refer to \cite{hatcher} for interested readers. In general, homology can be defined over an arbitrary field, but for simplicity, we restricted our attention to $\mathbb{Z}_{2}$. Furthermore, we will only deal with the homology classes whose algebraic objects are vector spaces. 

Let $C_{0}, C_{1}, \dots$ be vector spaces over $\mathbb{Z}_{2}$, and let $\partial_{n}: C_{n} \rightarrow C_{n-1}$ be linear maps satisfying $\partial_{n+1} \circ \partial_{n}=0$ for all $n \ge 0$, which we call \emph{boundary operators}. A \emph{chain complex} refers to the sequence 
\begin{equation*}
C_{\bullet}: \cdots \rightarrow C_{n+1} \xrightarrow{\partial_{n+1}} C_{n} \xrightarrow{\partial_{n}} C_{n-1} \rightarrow \cdots \xrightarrow{\partial_{1}} C_{0} \rightarrow 0.
\end{equation*}
Let ${\rm ker}(\partial_{n})=\{x \in C_{n} \text{ $|$ } \partial_{n}(x)=0\}$ be the kernel of $\partial_{n}$, which we call \emph{cycles}, and let ${\rm im}(\partial_{n})=\{y \in C_{n-1} \text{ $|$ } \text{there exists $x \in C_{n}$ such that $\partial_{n}(x)=y$}\}$ be the image of $\partial_{n}$, which we call \emph{boundaries}. Since $\partial_{n+1} \circ \partial_{n}=0$ holds for all $n$, it is clear that ${\rm im}(\partial_{n+1}) \subseteq {\rm ker}(\partial_{n})$ for all $n$. Since both ${\rm im}(\partial_{n+1})$ and ${\rm ker}(\partial_{n})$ are vector spaces, we may form the quotient vector space 
\begin{equation*}
H_{n}(C_{\bullet}):={\rm ker}(\partial_{n}) / {\rm im}(\partial_{n+1})
\end{equation*}
for all $n \ge 0$. We call $H_{n}(C_{\bullet})$ the \emph{$n$-th homology (group)} of $C_{\bullet}$, and the elements of $H_{n}(C_{\bullet})$ are called \emph{homology classes}. Moreover, the dimension of $H_{n}(C_{\bullet})$ as a vector space, denoted by $\beta_{k}(C_{\bullet})$, is called the \emph{$n$-th Betti number} of $C_{\bullet}$. 

\subsubsection{Simplicial Homology}
We will introduce \emph{simplicial homology}, a type of homology that is widely used in algebraic topology and tailored to our purposes. As the name suggests, simplicial homology is derived from simplicial complexes. 

Given a simplicial complex $\mathcal{K}$, we define a chain complex $C_{\bullet}(\mathcal{K})$ of $\mathbb{Z}_{2}$-vector spaces in the following way: Let $\mathcal{K}_{n}=\{\sigma_{1},\dots, \sigma_{k}\}$ be the $n$-skeleton of $\mathcal{K}$. Then we define $C_{n}(\mathcal{K})$ to be the vector space over $\mathbb{Z}_{2}$ with $\mathcal{K}_{n}$ as a basis. Moreover, for each $n$-simplex $\sigma_{i}=[v_{0},\dots,v_{n}] \in C_{n}(\mathcal{K})$, the boundary operator $\partial_{n}:C_{n}(\mathcal{K}) \rightarrow C_{n-1}(\mathcal{K})$ is defined by
\begin{equation*}
\partial_{n}(\sigma_{i}):= \sum_{j=0}^{n}[v_{0}, \dots, \hat{v_{j}},\dots, v_{n}],
\end{equation*}
where $\hat{v_{j}}$ means that $v_{j}$ is omitted. By extending linearly to all of $C_{n}(\mathcal{K})$, the boundary operator $\partial_{n}$ can be defined on $C_{n}(\mathcal{K})$. 

It is easy to see that the boundary operator $\partial_{n}$ satisfies $\partial_{n+1} \circ \partial_{n}=0$. Hence vector spaces $C_{n}(\mathcal{K})$ with the boundary operators $\partial_{n}$ form a chain complex. This allows us to form the homologies $H_{n}(C_{\bullet}(\mathcal{K}))$ of $C_{\bullet}(\mathcal{K})$, which we call \emph{simplicial homology}. 

\subsubsection{Persistence Homology and Diagrams}
Given a point cloud $P$ sampled from the unknown manifold $M$, how can we determine the topological characteristics of $M$ from $P$? The simplest way is to generate a suitable manifold $K$ from $P$ and compute its homology. However, homology is very sensitive to small changes, so there is a problem that the topological characteristics of $K$ can be very different from those of $M$. \emph{Persistence homology} (\cite{persistence}) addresses this problem by incorporating the scale $\varepsilon$, which varies from $0$ to $\infty$ , into homology computations. As $\varepsilon$ increases, the topological characteristics of a manifold induced by $\varepsilon$ can vary: some topological information born at some $\epsilon_0$ and die $\epsilon_1$. Informally, the idea of persistence homology is to track all the birth and death of topological information with scale $\varepsilon$. 

A persistence homology essentially tracks the evolution of homology classes in a filtration of simplicial complexes $\mathcal{K}$. Once a simplicial complex $\Kk$ admits a filtration 
\begin{equation*}
\Kk^\bullet: \emptyset=\Kk^{-\infty} \subseteq \cdots \subseteq \Kk^{i} \subseteq \cdots \subseteq \Kk^{j} \subseteq \cdots \subseteq \Kk^{\infty}=\Kk,
\end{equation*}
then each inclusion $f_{i,j}: \mathcal{K}^{i} \hookrightarrow \mathcal{K}^{j}$ is a simplicial map so that it induces a linear map $H_{n}(f_{i,j})$ between the homologies of $H_{n}(\mathcal{K}^{i})$ and $H_{n}(\mathcal{K}^{j})$. Such indices $i$ and $j$ are referred to as 'time' in persistence homology theory. Fix a non-negative integer $n \ge 0$. For any $i < j$, one can see whether a homology class in $H_{n}(\mathcal{K}^{i})$ are mapped to the same homology class in $H_{n}(\mathcal{K}^{j})$ by $H_{n}(f_{i,j})$. If this happens, such a homology class is said to \emph{persist from time $i$ to $j$}. If not, such a class is said to have \emph{died at some time between $i$ and $j$}. If a homology class first appears at time $i$ and disappears at time $j$, then we say that this class is \emph{born at time $i$ and dies at time $j$}, and appends $(i, j)$ as an element of \emph{$n$-th persistence homology}. By tracking all homology classes of $\{H_{n}(\mathcal{K}^{t})\}_{t \in \mathbb{R}}$ and appending them as elements of $n$-th persistence homology, the $n$-th persistence homology can be seen as a multi-set of birth and death tuples. Now, regard the $n$-th persistence homology as a multi-set of points in $\mathbb{R}^{2}$. Then we call such a multi-set \emph{$n$-th persistence diagram}. 

Informally, $n$-th persistence homology (or diagram) tracks different topological features depending on n. For example, $0$-th persistence homology tracks the birth and death of connected components, while $1$-th persistence homology tracks those of circular holes. The general $n$-th persistence homology has information about the birth and death of $n$-dimensional holes. Through persistence homology information for each n, we can understand the characteristics of a given topological object.

\subsubsection{Vietoris-Rips Complex and Filtration}
Given a point cloud $P$ with a distance matrix $M$ and a scale $\varepsilon$, the \emph{Vietoris-Rips complex} of $\varepsilon$ is a type of simplicial complex constructed from $P$ and $M$ whose simplices are formed by connecting points in $P$ that are within a certain distance $\varepsilon$ of each other. In particular, given two scales $\varepsilon_1 < \varepsilon_2$, the Vietoris-Rips complex of $\varepsilon_1$ is contained in that of $\varepsilon_2$. Thus, by adjusting a scale $\varepsilon$, we can define a filtration of simplicial complexes, called \emph{Vietoris-Rips filtration}. This makes it possible to analyze the persistence homologies of a point cloud. Here we will provide their definitions below.

\begin{definition} \label{defn:vr_cpx}
    Let $P$ be a finite point cloud, and let $M$ be a non-negative symmetric matrix of size $|P| \times |P|$ with zero diagonals. The \emph{Vietoris-Rips} complex of $(P, M)$ with a scale $\varepsilon \in \mathbb{R}_{\ge 0}$, denoted as ${\rm VR}^{\varepsilon}(P, M)$, has one $t$-simplex per $(t+1)$-tuple of points $(u_0, \dots, u_t)$ of $P$ such that $M(u_i, u_j) \le \varepsilon$ for all $i,j =0, \dots, t$. Moreover, the \emph{Vietoris-Rips filtration} of $(P, M)$ is the indexed family ${\rm VR}(P,M)=\{ {\rm VR}^{\varepsilon}(P, M)\}_{\varepsilon \in \mathbb{R}}$. Finally, we denote the $k$-skeleton of ${\rm VR}^{\varepsilon}(P, M)$ as ${\rm VR}_{k}^{\varepsilon}(P, M)$, and let ${\rm VR}_{k}(P,M)=\{ {\rm VR}_{k}^{\varepsilon}(P, M)\}_{\varepsilon \in \mathbb{R}}$ for $k \in \mathbb{Z}_{\ge 0}$. 
\end{definition}

Finally, we will provide a short remark regarding the matrix $M$ used in Definition~\ref{defn:vr_cpx}. In general, when defining the Vietoris-Rips complex, the matrix $M$ is constructed based on the distance between two points according to some metric. However, since the Vietoris-Rips complex can be defined based solely on the pairwise distances between points, we follow a general definition (Definition~\ref{defn:vr_cpx}) with minimal conditions on the matrix $M$, independent of a metric. In other words, the matrix $M$ in Definition~\ref{defn:vr_cpx} may not satisfy properties of metrics such as triangle inequality. For example, $M$ may not satisfy the triangle inequality, that is, $M_{i,j} + M_{j,k} \nleq M_{i,k}$ for some $i$, $j$, and $k$. However, since the existence of a metric does not affect the theoretical results that will be developed later in this paper, we will use the general version of the Vietoris-Rips complex based on such a matrix $M$.

\section{Notations and conventions} \label{subsection:notations}

In this section, we summarize some notations used throughout this paper. First of all, we use the following notations, which are commonly used, to distinguish between a set and a multi-set: we denote a set by $\{\cdots\}$ and a multi-set by $\{\{\cdots\}\}$. Moreover, the WL test refers to the $1$-WL test, otherwise specified. 

Let $(\mathcal{G}, \mathcal{C})$ be a space of graphs with node coloring $\mathcal{C}$, and let $\chi_{\mathcal{G}} \subseteq \mathbb{R}^{N}$ (or simply $\chi$) be a space of node features of $(\mathcal{G}, \mathcal{C})$ containing $(0,\dots,0)$, where $N \in \mathbb{N}$. For a graph $G \in \mathcal{G}$, $V(G)$ denotes a set of nodes in $G$, $E(G)$ denotes a set of edges in $G$, and $E^{\mathcal{C}}(G)$ denotes a multi-set of colored edges in $G$, that is, $E^{\mathcal{C}}(G)=\{\{ \text{ }\{\{\mathcal{C}(u), \mathcal{C}(v)\}\} \text{ $|$ } \{\{u, v\}\} \in E(G)\}\}$. 

Finally, since we frequently use the notation in Definition~\ref{defn:vr_cpx}, we summarize it here again. Given a finite point cloud $P$ and a non-negative symmetric matrix $M$ of size $|P| \times |P|$ with zero diagonals, we denote the $k$-skeleton of ${\rm VR}^{\varepsilon}(P, M)$ as ${\rm VR}_{k}^{\varepsilon}(P, M)$, and put ${\rm VR}_{k}(P,M)=\{ {\rm VR}_{k}^{\varepsilon}(P, M)\}_{\varepsilon \in \mathbb{R}}$ for $k \in \mathbb{Z}_{\ge 0}$.

\section{Theoretical Framework: Topological Edge Diagram} \label{section:theoretical_framework}

In this section, we introduce a novel edge filtration-based persistence diagram, which we call \emph{Topological Edge Diagram}. To begin with, we introduce the \emph{edge filtration} map, and then define the topological edge diagram by leveraging it. In the end, we will wrap up this section by analyzing its theoretical expressive power.

\begin{figure}[t]
    \centering
    \includegraphics[width=0.9\textwidth]{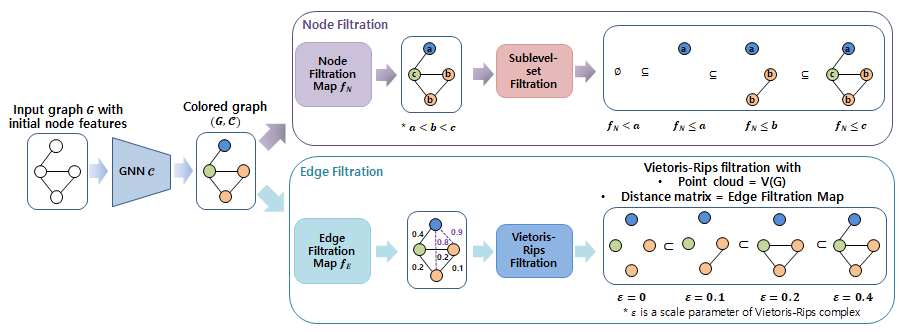}
    \caption{The difference in subgraph construction between two types of filtrations: node filtration and edge filtration.}
    \label{fig:node_edge_diff_1}
\end{figure}

\subsection{Definition of Topological Edge Diagram (TED)}

To introduce our persistence diagram, we first introduce the \emph{edge filtration} (map) which is essential in developing our theoretical framework. To grasp the distinction between node (\cite{gfl, togl}) and edge filtration, see Figure \ref{fig:node_edge_diff_1}. 

\begin{definition} \label{defn:edge_filtration}
    Let $G \in \mathcal{G}$ be a graph, and let $\mathcal{C}$ be a node coloring of $\mathcal{G}$. 
    \begin{enumerate}
        \item An \emph{edge filtration (map)} $ef^{\mathcal{C}}$ of $\mathcal{G}$ with respect to $\mathcal{C}$ is defined to be a positive real-valued function $ef^{\mathcal{C}}:\bigcup_{G \in \mathcal{G}} E^{\mathcal{C}}(G) \rightarrow \mathbb{R}_{>0}$ such that $\sup{\{ef^{\mathcal{C}}(x) \text{ }|\text{ } x \in \bigcup_{G \in \mathcal{G}} E^{\mathcal{C}}(G)\}} < \infty$, where $E^{\mathcal{C}}(G)=\{\{ \text{ }\{\{\mathcal{C}(u), \mathcal{C}(v)\}\} \text{ $|$ } \{\{u, v\}\} \in E(G)\}\}$ is a multi-set of colored edges.
        \item For an edge filtration $ef^{\mathcal{C}}$ and $i \ge 0$, ${\rm ph}_{\rm VR}^{i}(G, ef^{\mathcal{C}})$ is the $i$-th persistence diagram of $1$-skeleton of Vietoris-Rips filtration with point cloud $V(G)$ whose distance matrix $M(G)$ is defined as follows: for any $i, j=1, \dots, |V(G)|$ corresponding to nodes $u_i, u_j \in V(G)$, 
        \begin{equation*}
            M(G)_{i,j}=
            \begin{cases}
            ef^{\mathcal{C}}(\{\{\mathcal{C}(u_i), \mathcal{C}(u_j)\}\}) & \text{if } \{\{\mathcal{C}(u_i), \mathcal{C}(u_j)\}\} \in E^{\mathcal{C}}(G), \\
            \infty & \text{otherwise }
            \end{cases}
        \end{equation*}
    \end{enumerate}
\end{definition}


\begin{remark} \label{rmk:natural_choice}
In this remark, we will explain why the $1$-skeleton of the Vietoris-Rips complex of $V(G)$ is a natural choice when defining the persistent homology ${\rm ph}_{\rm VR}^{i}(G, ef^{\mathcal{C}})$ of the graph $G$ (Definition~\ref{defn:edge_filtration}-2). Before going on, we briefly explain the conditions that the filtration $\{X_{\varepsilon}\}_{\varepsilon>0}$ of a simplicial complex $X$ must satisfy to define persistent homology: given a simplicial complex $X$, the filtration $\{X_{\varepsilon}\}_{\varepsilon>0}$ to define the persistence homology of $X$ should satisfy the following condition: for any $\varepsilon>0$, $X_{\varepsilon} \hookrightarrow X$ as simplicial complexes and $\lim_{\varepsilon \rightarrow \infty}X_{\varepsilon}=X$. 

In this perspective, we see how the dimension of the clique complex of $G$ is related when defining its Vietoris-Rips filtration. More precisely, let ${\rm VR}^{\varepsilon}_{d}(V(G), M(G))$ be the $d$-skeleton of Vietoris-Rips complex of scale $\varepsilon >0$ with point cloud $V(G)$ whose distance matrix is $M(G)$ (Definition~\ref{defn:edge_filtration}). Since a graph $G \in \mathcal{G}$ is $1$-simplicial complex, it is easy to see the following:
\begin{enumerate}
    \item for any $\varepsilon>0$, ${\rm VR}^{\varepsilon}_{1}(V(G), M(G)) \hookrightarrow G$ as simplicial complexes,
    \item $\lim_{\varepsilon \rightarrow \infty} {\rm VR}^{\varepsilon}_{1}(V(G), M(G)) = G$, and
    \item for any $d \ge 2$, there exists $\varepsilon>0$ such that ${\rm VR}^{\varepsilon}_{d}(V(G), M(G)) \not\hookrightarrow G$ as simplicial complexes.
\end{enumerate}
In other words, $1$-skeleton is the most natural choice for defining the persistence homology ${\rm ph}_{\rm VR}^{i}(G, ef^{\mathcal{C}})$ of the graph. 
\end{remark}

Note that the edge filtration induces a novel persistence diagram by using the Vietoris-Rips filtration of graph. In particular, when the edge filtration is injective, we call this persistence diagram, \emph{Topological Edge Diagram (TED)}. 

\begin{definition} \label{defn:top_edge_filt}
    Given a graph $G \in \mathcal{G}$ and an injective edge filtration $ef^{\mathcal{C}}$ with a node coloring $\mathcal{C}$, a \emph{topological edge diagram (TED) of $(G, ef^{\mathcal{C}})$}, denoted by ${\rm TED}(G, ef^{\mathcal{C}})$, is defined to be the tuple of multi-sets $({\rm ph}_{\rm VR}^{0}(G, ef^{\mathcal{C}}),\text{ } {\rm ph}_{\rm VR}^{1}(G, ef^{\mathcal{C}}))$.
\end{definition}

\subsection{Theoretical Expressivity of TED}

It is clear that an injective edge filtration $ef^{\mathcal{C}}$ of the node coloring $\mathcal{C}$ includes all the pairwise node coloring information. Hence our question is whether TED also contains all of the node coloring information. For theoretical clarity, we first propose a weak assumption about the node coloring $\mathcal{C}$, called the \emph{Degree Assumption}. 

\begin{definition}[Degree Assumption]
    Let $\mathcal{C}$ be a node coloring of $\mathcal{G}$, that is, $\mathcal{C}: \bigcup_{G \in \mathcal{G}} V(G) \rightarrow \chi$. We say that $C$ satisfies the \emph{degree assumption} if the following holds: for any $u, v \in \bigcup_{G \in \mathcal{G}}V(G)$, if $\mathcal{C}(u)=\mathcal{C}(v)$, then ${\rm deg}(u)={\rm deg}(v)$. 
\end{definition}

We remark that the degree assumption is a very weak assumption related to node coloring, and almost all node colorings we know satisfy this assumption. For example, any node colorings of all message passing GNNs with arbitrary initial node colorings satisfy the Degree Assumption. Furthermore, the WL test, which is a classical graph isomorphism test, satisfies the Degree Assumption as demonstrated in the following example.

Now, we are ready to show the strong expressive power of TED. Specifically, our next result shows that TED can incorporate all of the node coloring information if any graph $G \in \mathcal{G}$ has no isolated nodes.


\begin{lemma} \label{lem:core}
    Let $G, H \in \mathcal{G}$ be graphs with no isolated nodes, and let $\mathcal{C}$ be a node coloring of $\mathcal{G}$ satisfying the degree assumption. If $\{\{\mathcal{C}(u) \text{ $|$ } u \in V(G)\}\} \neq \{\{\mathcal{C}(v) \text{ $|$ } v \in V(H)\}\}$, then ${\rm TED}(G, ef^{\mathcal{C}}) \neq {\rm TED}(H, ef^{\mathcal{C}})$ for any injective edge filtration $ef^{\mathcal{C}}$ and any $k \ge 1$.
\end{lemma}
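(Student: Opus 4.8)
The plan is to extract the multiset of node colorings of a graph $G$ (among graphs with no isolated nodes, under a coloring $\mathcal{C}$ satisfying the Degree Assumption) directly from the $0$-th persistence diagram ${\rm ph}_{\rm VR}^{0}(G, ef^{\mathcal{C}})$, so that two graphs with different color multisets must yield different TEDs. The key observation is that, by injectivity of $ef^{\mathcal{C}}$ and the fact that $ef^{\mathcal{C}}$ takes values in a bounded subset of $\mathbb{R}_{>0}$, the distance matrix $M(G)$ encodes, for each unordered color pair $\{\{c,c'\}\}$ that appears as a colored edge, a \emph{distinct} finite positive value, while non-edges get $\infty$. Hence the Vietoris-Rips filtration on $V(G)$ only ever adds an edge $\{u_i,u_j\}$ at the unique time $ef^{\mathcal{C}}(\{\{\mathcal{C}(u_i),\mathcal{C}(u_j)\}\})$, and at each filtration value at most one "color-pair class" of edges is inserted simultaneously.

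First I would analyze the $0$-dimensional persistent homology of this filtration. At $\varepsilon = 0$ the $1$-skeleton is $|V(G)|$ isolated vertices, so $H_0$ has dimension $|V(G)|$ and every point of the point cloud contributes a class born at $0$. Each such class dies exactly when the connected component it represents first merges with another; since all merges happen at finite times $\le \sup ef^{\mathcal{C}} < \infty$ and the graph has no isolated nodes (so every vertex eventually connects to something), precisely one class persists to $\infty$ per connected component of $G$, and all other $|V(G)| - (\#\text{components})$ classes die at finite times. Thus ${\rm ph}_{\rm VR}^{0}(G, ef^{\mathcal{C}})$ determines $|V(G)|$ (the total number of $0$-dimensional points) and the number of connected components of $G$. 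The core step is to recover the color multiset: I would argue by sorting the finite death times. The smallest finite value appearing in $M(G)$, say $\varepsilon_1 = ef^{\mathcal{C}}(\{\{c_1,c_1'\}\})$, is the first time any merge occurs, and the number of deaths at $\varepsilon_1$ equals the number of components merged by adding all edges of color-pair $\{\{c_1,c_1'\}\}$; iterating through the sorted distinct finite values of $ef^{\mathcal{C}}$ (each of which, by injectivity, corresponds to exactly one color pair) reconstructs the sequence of merge events. The Degree Assumption enters here: knowing $\mathcal{C}(u)$ fixes $\deg(u)$, so from the color multiset together with the edge-color multiset $E^{\mathcal{C}}(G)$ one can do a consistency/counting bookkeeping; more directly, I would show that the full barcode ${\rm ph}_{\rm VR}^{0}$ together with the global quantity $\sup ef^{\mathcal{C}}$ lets one read off $E^{\mathcal{C}}(G)$ up to the contribution of edges that never change the component count (edges internal to an already-formed component), and then the Degree Assumption converts the resulting degree information back into the node-color multiset $\{\{\mathcal{C}(u) \mid u\in V(G)\}\}$.

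The cleanest route, and the one I would ultimately write up, is the contrapositive packaged as a single implication: assume ${\rm TED}(G,ef^{\mathcal{C}}) = {\rm TED}(H,ef^{\mathcal{C}})$; then in particular ${\rm ph}_{\rm VR}^{0}(G,ef^{\mathcal{C}}) = {\rm ph}_{\rm VR}^{0}(H,ef^{\mathcal{C}})$, which forces $|V(G)| = |V(H)|$ and an equality of the "merge profiles" indexed by the finite values in the image of $ef^{\mathcal{C}}$; combined with the Degree Assumption and the no-isolated-nodes hypothesis, I would deduce $\{\{\mathcal{C}(u)\mid u\in V(G)\}\} = \{\{\mathcal{C}(v)\mid v\in V(H)\}\}$, contradicting the hypothesis. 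The main obstacle is the bookkeeping in the core step: a single merge event at time $\varepsilon$ can involve edges of that one color pair joining several components at once in a way that only counts \emph{net} component reductions, so edges that are redundant for connectivity are invisible in the $0$-diagram. Handling this requires care — either by a careful induction on the sorted filtration values tracking how many vertices of each color have already been absorbed into components, or by invoking an auxiliary counting identity linking $\sum_{u}\deg(u)$, $|E^{\mathcal{C}}(G)|$, and the number of finite bars — and it is here that the Degree Assumption is genuinely needed, since without it two color multisets could be compatible with the same sequence of merges. I expect the rest (the $H_0$ computation, the role of $\sup ef^{\mathcal{C}} < \infty$ guaranteeing all nontrivial classes die by a fixed finite time except one per component) to be routine given the preliminaries on persistence homology and Vietoris-Rips filtrations established above.
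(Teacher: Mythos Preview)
Your plan has a genuine gap: you try to recover the node-color multiset from ${\rm ph}_{\rm VR}^{0}$ alone, and this is impossible in general. Here is a concrete counterexample. Take $\mathcal{G}=\{G,H\}$ where $G$ and $H$ are both copies of $K_{4}$, so every vertex has degree $3$ and the Degree Assumption is trivially satisfied for any coloring. Color the vertices of $G$ by $(a,b,c,a)$ and those of $H$ by $(a,b,c,b)$; the color multisets $\{\{a,a,b,c\}\}$ and $\{\{a,b,b,c\}\}$ differ. Now choose any injective $ef^{\mathcal{C}}$ with $ef^{\mathcal{C}}(\{\{a,b\}\})<ef^{\mathcal{C}}(\{\{a,c\}\})$ and all other pair-values larger still. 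In $G$ the two $\{a,b\}$-edges are $v_{1}v_{2},v_{2}v_{4}$ and the two $\{a,c\}$-edges are $v_{1}v_{3},v_{3}v_{4}$; in $H$ the two $\{a,b\}$-edges are $v_{1}v_{2},v_{1}v_{4}$ and the single $\{a,c\}$-edge is $v_{1}v_{3}$. In both graphs the $\{a,b\}$-edges merge $\{v_{1},v_{2},v_{4}\}$ into one component (two deaths) and the $\{a,c\}$-edge(s) then attach $v_{3}$ (one death), after which everything else is cycle-forming. Hence
\[
{\rm ph}_{\rm VR}^{0}(G,ef^{\mathcal{C}})={\rm ph}_{\rm VR}^{0}(H,ef^{\mathcal{C}})=\{\{(0,ef^{\mathcal{C}}(\{\{a,b\}\}))^{2},(0,ef^{\mathcal{C}}(\{\{a,c\}\})),(0,\infty)\}\},
\]
so no ``merge profile'' argument, no induction on sorted filtration values, and no counting identity involving only $0$-dimensional data can separate $G$ from $H$. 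The obstacle you flag---edges that are redundant for connectivity are invisible in the $0$-diagram---is fatal, not merely a bookkeeping nuisance.

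What the paper does, and what your outline is missing, is to use ${\rm ph}_{\rm VR}^{1}$ as well. The key observation is that in the $1$-skeleton Vietoris--Rips filtration, the appearance of \emph{any} edge either kills a connected component (contributing a finite bar $(0,ef^{\mathcal{C}}(e))$ to ${\rm ph}_{\rm VR}^{0}$) or creates a cycle that never dies (contributing a bar $(ef^{\mathcal{C}}(e),\infty)$ to ${\rm ph}_{\rm VR}^{1}$). Thus from the pair $({\rm ph}_{\rm VR}^{0},{\rm ph}_{\rm VR}^{1})$ one can read off the full colored-edge multiset $E^{\mathcal{C}}(G)$, and only \emph{then} does the Degree Assumption finish the job via the degree-sum identity $\sum_{u:\,\mathcal{C}(u)=C}\deg(u)=n_{C}\cdot d_{C}$. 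In the counterexample above, ${\rm ph}_{\rm VR}^{1}(G,ef^{\mathcal{C}})$ and ${\rm ph}_{\rm VR}^{1}(H,ef^{\mathcal{C}})$ do differ (the second $\{a,c\}$-edge in $G$ creates a cycle at time $ef^{\mathcal{C}}(\{\{a,c\}\})$, which has no counterpart in $H$), so the TEDs are indeed distinct---but you need the $1$-dimensional diagram to see it.
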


\begin{proof}
    Since $\{\{\mathcal{C}(u) \text{ }|\text{ } u \in V(G)\}\} \neq \{\{\mathcal{C}(v) \text{ }|\text{ } v \in V(H)\}\}$, there are three cases: 
    \begin{enumerate}
        \item $|V(G)| \neq |V(H)|$,
        \item $|V(G)|=|V(H)|$ and there exists $u \in V(G)$ such that $\mathcal{C}(u) \notin \{\{\mathcal{C}(v) \text{ }|\text{ } v \in V(H)\}\}$,
        \item $|V(G)|=|V(H)|$ and $\{\mathcal{C}(u) \text{ }|\text{ } u \in V(G)\} = \{\mathcal{C}(v) \text{ }|\text{ } v \in V(H)\}$ but $\{\{\mathcal{C}(u) \text{ }|\text{ } u \in V(G)\}\} \neq \{\{\mathcal{C}(v) \text{ }|\text{ } v \in V(H)\}\}$. 
    \end{enumerate}
    Fix an injective edge filtration $ef^{\mathcal{C}}$. First of all, suppose the case $(1)$. Since an edge filtration $ef^{\mathcal{C}}$ is positive, each element in ${\rm ph}_{\rm VR}^{0}(G, ef^{\mathcal{C}})$ should be of the form $(0, t)$ for some $t \in (0, \infty]$. So the number of elements in the multi-set ${\rm ph}_{\rm VR}^{0}(G, ef^{\mathcal{C}})$ is equal to the number of nodes in $G$. Hence $|V(G)| \neq |V(H)|$ implies that the cardinalities of ${\rm ph}_{\rm VR}^{0}(G, ef^{\mathcal{C}})$ and ${\rm ph}_{\rm VR}^{0}(H, ef^{\mathcal{C}})$ are not equal as multi-sets so that ${\rm ph}_{\rm VR}^{0}(G, ef^{\mathcal{C}}) \neq {\rm ph}_{\rm VR}^{0}(H, ef^{\mathcal{C}})$. 

    Next, suppose the case $(2)$. Let $u \in V(G)$ be a node satisfying $\mathcal{C}(u) \notin \{\{\mathcal{C}(v) \text{ }|\text{ } v \in V(H)\}\}$. Since $G$ has no isolated nodes, there exists an edge $e_G \in E(G)$ whose one of the end points is $u$. Since $\mathcal{C}(u) \notin \{\{\mathcal{C}(v) \text{ }|\text{ } v \in V(H)\}\}$, it is easy to see from the injectivity of $ef^{\mathcal{C}}$ that there does not exist an edge $e \in E(H)$ such that $ef^{\mathcal{C}}(e)=ef^{\mathcal{C}}(e_G)$. 
    
    Now we claim that either $(0, ef^{\mathcal{C}}(e_G)) \in {\rm ph}_{\rm VR}^{0}(G, ef^{\mathcal{C}})$ or $(ef^{\mathcal{C}}(e_G), \infty) \in {\rm ph}_{\rm VR}^{1}(G, ef^{\mathcal{C}})$ holds but neither $(0, ef^{\mathcal{C}}(e_G)) \in {\rm ph}_{\rm VR}^{0}(H, ef^{\mathcal{C}})$ nor $(ef^{\mathcal{C}}(e_G), \infty) \in {\rm ph}_{\rm VR}^{1}(H, ef^{\mathcal{C}})$ holds. Note that it is clear that both $(0, ef^{\mathcal{C}}(e_G)) \notin {\rm ph}_{\rm VR}^{0}(H, ef^{\mathcal{C}})$ and $(ef^{\mathcal{C}}(e_G), \infty) \notin {\rm ph}_{\rm VR}^{1}(H, ef^{\mathcal{C}})$ hold since there is no edge $e \in E(H)$ satisfying $ef^{\mathcal{C}}(e)=ef^{\mathcal{C}}(e_G)$. Hence we need to show the first part of the claim. Note that the birth of a colored edge $e=\{\{\mathcal{C}(u), \mathcal{C}(v)\}\} \in E^{\mathcal{C}}(G)$ corresponds to either of the following two cases: 
    \begin{enumerate}[label=(\roman*)]
        \item there was no path between $u$ and $v$ before the birth of $e$, or
        \item there was a path between $u$ and $v$ before the birth of $e$.
    \end{enumerate}
    In case $(i)$, the birth of $e$ leads to the death of a connected component, which corresponds to $(0, ef^{\mathcal{C}}(e_G)) \in {\rm ph}_{\rm VR}^{0}(G, ef^{\mathcal{C}})$. Moreover, the second case $(ii)$ implies that $e$ creates a new cycle so that it corresponds to $(ef^{\mathcal{C}}(e_G), \infty) \in {\rm ph}_{\rm VR}^{1}(G, ef^{\mathcal{C}})$ since ${\rm ph}_{\rm VR}^{i}(G, ef^{\mathcal{C}})$ is computed from the $1$-skeleton of Vietoris-Rips complex with point cloud $V(G)$. This proves our claim. In other words, our claim shows that there is an element that is contained in ${\rm ph}_{\rm VR}^{0}(G, ef^{\mathcal{C}}) \cup {\rm ph}_{\rm VR}^{1}(G, ef^{\mathcal{C}})$ but not in ${\rm ph}_{\rm VR}^{0}(H, ef^{\mathcal{C}}) \cup {\rm ph}_{\rm VR}^{1}(H, ef^{\mathcal{C}})$, we conclude that $({\rm ph}_{\rm VR}^{0}(G, ef^{\mathcal{C}}), \text{ } {\rm ph}_{\rm VR}^{1}(G, ef^{\mathcal{C}})) \neq ({\rm ph}_{\rm VR}^{0}(H, ef^{\mathcal{C}}), \text{ } {\rm ph}_{\rm VR}^{1}(H, ef^{\mathcal{C}}))$. 

    Finally, it remains to show the conclusion holds for the case $(3)$. Since $\{\mathcal{C}(u) \text{ $|$ } u \in V(G)\}=\{\mathcal{C}(v) \text{ $|$ } v \in V(H)\}$, there exists an indexed coloring set $\{C_{i}\}_{i\in I}$ with an index set $I$ such that for each $i \in I$, there exists $u \in V(G)$ and $v \in V(H)$ satisfying $\mathcal{C}(u)=\mathcal{C}(v)=C_{i}$. Towards contradiction, suppose not, that is, ${\rm ph}_{\rm VR}^{k}(G, ef^{\mathcal{C}}) = {\rm ph}_{\rm VR}^{k}(H, ef^{\mathcal{C}})$ for any $k=0, 1$. Again, by the claim in the proof of case $(2)$, this implies that $E^{\mathcal{C}}(G)=E^{\mathcal{C}}(H)$. Hence, for all $i \in I$, we have 
    \begin{equation} \label{eq:same_deg}
    \sum_{\substack{u\in V(G), \\ \mathcal{C}(u)=C_i}} {\rm deg}(u) = \sum_{\substack{v\in V(H), \\ \mathcal{C}(v)=C_i}} {\rm deg}(v).
    \end{equation}
    However, since the multi-set of node features of $G$ and $H$ are not the same, that is, $\{\{ \mathcal{C}(u) \text{ $|$ } u \in V(G)\}\} \neq \{\{ \mathcal{C}(v) \text{ $|$ } v \in V(H)\}\}$, there exists an index $i_0 \in I$ such that 
    \begin{equation*}
    |\{u \in V(G) \text{ $|$ } \mathcal{C}(u)=C_{i_{0}} \}| \neq |\{v \in V(H) \text{ $|$ } \mathcal{C}(v)=C_{i_{0}} \}|.
    \end{equation*}
    By the definition of $I$, both $\{u \in V(G) \text{ $|$ } \mathcal{C}(u)=C_{i_{0}} \}$ and $\{v \in V(H) \text{ $|$ } \mathcal{C}(v)=C_{i_{0}} \}$ are both non-empty, thus we put $\{u \in V(G) \text{ $|$ } \mathcal{C}(u)=C_{i_{0}} \} = \{u_1, \dots, u_g\}$ and $\{v \in V(H) \text{ $|$ } \mathcal{C}(v)=C_{i_{0}} \} = \{v_1, \dots, v_h\}$ with $g \neq h$. Since all $u_i$ and $v_j$ have the same coloring, the degree assumption of $\mathcal{C}$ gives that 
    \begin{equation*}
    {\rm deg}(u_1) = \cdots = {\rm deg}(u_g) = {\rm deg}(v_1) = \cdots = {\rm deg}(v_h).
    \end{equation*}
    Moreover, such a degree value is positive since $G$ and $H$ have no isolated nodes. Hence we have 
    \begin{equation*}
    \sum_{\substack{u \in V(G), \\ \mathcal{C}(u)=C_{i_0}}} {\rm deg}(u) = g \cdot {\rm deg}(u_1) \neq h \cdot deg(v_1) = \sum_{\substack{v \in V(H), \\ \mathcal{C}(v)=C_{i_0}}} {\rm deg}(v),
    \end{equation*}
    which contradicts to Equation \ref{eq:same_deg}. Thus 
    \begin{equation*}
        ({\rm ph}_{\rm VR}^{0}(G, ef^{\mathcal{C}}), \text{ } {\rm ph}_{\rm VR}^{1}(G, ef^{\mathcal{C}})) \neq ({\rm ph}_{\rm VR}^{0}(H, ef^{\mathcal{C}}), \text{ } {\rm ph}_{\rm VR}^{1}(H, ef^{\mathcal{C}})),
    \end{equation*}
    which concludes the proof.     
\end{proof}


Lemma \ref{lem:core} presents a general result applicable to any node coloring $\mathcal{C}$. Moving forward, we will narrow our focus to WL coloring. Since WL coloring satisfies the degree assumption, Lemma \ref{lem:core} implies that ${\rm TED}(\cdot, ef^{\mathcal{C}})$ is at least as powerful as the WL test for any edge filtration $ef^{\mathcal{C}}$, assuming non-isolated nodes. Our next theorem, which is the core of our theoretical framework, shows that a stronger result holds for WL coloring: TED is strictly more powerful than the WL test without assuming non-isolated nodes. 


\begin{theorem} \label{thm:wl_vr_comparison}
    Let $G, H \in \mathcal{G}$, and let $\mathcal{C}$ be the stable WL coloring whose initial node colorings are all the same. Additionally, let $ef^{\mathcal{C}}$ be an arbitrary injective edge filtration. If $G$ and $H$ are distinguishable by WL test, then ${\rm TED}(G, ef^{\mathcal{C}}) \neq {\rm TED}(H, ef^{\mathcal{C}})$. Moreover, there exists a pair of non-isomorphic graphs $(G, H) \in \mathcal{G} \times \mathcal{G}$ such that ${\rm TED}(G, ef^{\mathcal{C}}) \neq {\rm TED}(H, ef^{\mathcal{C}})$ while the WL test cannot distinguish them.  
\end{theorem}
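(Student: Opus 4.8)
The plan is to prove the two assertions of Theorem~\ref{thm:wl_vr_comparison} separately. For the first, I would argue by contradiction: assume ${\rm TED}(G, ef^{\mathcal{C}}) = {\rm TED}(H, ef^{\mathcal{C}})$ and deduce that the stable WL colorings of $G$ and $H$ have the same histogram, contradicting WL-distinguishability. This is essentially Lemma~\ref{lem:core} applied to the WL coloring, but with the ``no isolated nodes'' hypothesis removed, so the real work is recovering that hypothesis' conclusion using two extra facts. First, since $ef^{\mathcal{C}}$ is positive, $|{\rm ph}_{\rm VR}^{0}(G, ef^{\mathcal{C}})| = |V(G)|$ (as already noted in the proof of Lemma~\ref{lem:core}), so equality of the $0$-diagrams forces $|V(G)| = |V(H)|$. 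Second, the stable WL coloring from a uniform initial coloring assigns one and the same color $\chi_0$ to every isolated node, and this color is distinct from the color of every non-isolated node: an isolated node is split off from all non-isolated nodes already after the first refinement step, and it never splits afterwards because its neighbourhood stays empty.

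Concretely, from ${\rm TED}(G, ef^{\mathcal{C}}) = {\rm TED}(H, ef^{\mathcal{C}})$ I would first invoke the ``claim'' established inside the proof of Lemma~\ref{lem:core}: at each filtration time $t$ exactly the edges whose colored type is $c = (ef^{\mathcal{C}})^{-1}(t)$ enter the $1$-skeleton, and each such edge contributes either an $H_0$-death $(0,t) \in {\rm ph}_{\rm VR}^{0}$ or an $H_1$-birth $(t,\infty) \in {\rm ph}_{\rm VR}^{1}$; since $ef^{\mathcal{C}}$ is injective, distinct colored edges occupy distinct times $t$, so these counts are cleanly separated and the pair $({\rm ph}_{\rm VR}^{0}, {\rm ph}_{\rm VR}^{1})$ recovers the multiset $E^{\mathcal{C}}(G)$. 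Hence $E^{\mathcal{C}}(G) = E^{\mathcal{C}}(H)$. Next, for each color $\gamma$ the number of incidences of $\gamma$ in $E^{\mathcal{C}}(G)$ equals ${\rm deg}(\gamma)\cdot|\{u \in V(G) : \mathcal{C}(u) = \gamma\}|$, where ${\rm deg}(\gamma)$ is well-defined by the degree assumption; so for every non-isolated color ($ {\rm deg}(\gamma) > 0$) the two graphs have equally many $\gamma$-colored nodes. Combining this with $|V(G)| = |V(H)|$ pins down the number of $\chi_0$-colored (isolated) nodes as well, so the full color histograms of $G$ and $H$ coincide, contradicting that the WL test distinguishes them.

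For the second assertion I would exhibit the standard WL-blind pair $G = C_6$ and $H = C_3 \sqcup C_3$ (two disjoint triangles). Both graphs are $2$-regular, so WL refinement from a uniform initial coloring never splits any class; the stable coloring assigns a single color $c$ to all six vertices of each graph, the histograms agree, and WL fails to distinguish them, while clearly $G \not\cong H$. Under this coloring every edge has the single colored type $\{\{c,c\}\}$, so $ef^{\mathcal{C}}$ is automatically injective; writing $t_0 := ef^{\mathcal{C}}(\{\{c,c\}\}) \in (0,\infty)$, the $1$-skeleton of ${\rm VR}^{\varepsilon}(V(\cdot), M(\cdot))$ is a discrete set of six points for $\varepsilon < t_0$ and is the graph itself for $\varepsilon \ge t_0$. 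Reading off Betti numbers ($\beta_0 = 1,\ \beta_1 = 1$ for $C_6$; $\beta_0 = 2,\ \beta_1 = 2$ for $C_3 \sqcup C_3$) gives ${\rm ph}_{\rm VR}^{1}(C_6, ef^{\mathcal{C}}) = \{(t_0,\infty)\}$ and ${\rm ph}_{\rm VR}^{1}(C_3 \sqcup C_3, ef^{\mathcal{C}}) = \{(t_0,\infty),(t_0,\infty)\}$ (and likewise the $0$-diagrams differ in their number of infinite bars), so ${\rm TED}(C_6, ef^{\mathcal{C}}) \neq {\rm TED}(C_3 \sqcup C_3, ef^{\mathcal{C}})$, as required.

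The main obstacle is the isolated-node bookkeeping in the first assertion: one cannot simply quote Lemma~\ref{lem:core}, because isolated vertices contribute infinite $H_0$-bars that are indistinguishable in the diagram from the infinite $H_0$-bars produced by tree components of the rest of the graph, so the naive ``delete the isolated vertices and apply Lemma~\ref{lem:core}'' reduction breaks down. The fix is the counting argument above, which never tries to separate those bars and instead recovers the isolated-node count indirectly from $|V(G)| = |V(H)|$ together with the degree assumption; this is also the only place where the specific structure of the WL coloring (all isolated nodes share one color) is used, the remainder of the argument going through verbatim for any node coloring satisfying the degree assumption.
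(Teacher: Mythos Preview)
Your proof is correct and rests on the same ingredients as the paper's: the cardinality of ${\rm ph}_{\rm VR}^{0}$ forces $|V(G)|=|V(H)|$; the claim inside the proof of Lemma~\ref{lem:core} (each entering edge yields either an $H_0$-death $(0,t)$ or an $H_1$-birth $(t,\infty)$) lets one read off $E^{\mathcal{C}}(G)=E^{\mathcal{C}}(H)$ from equality of the two diagrams; and the degree assumption converts colored-edge incidence counts into node-color multiplicities. The paper organizes this differently: it splits into the cases $|V(G^0)|\neq |V(H^0)|$ and $|V(G^0)|=|V(H^0)|$ (with $G^0,H^0$ the isolated parts), handling the first by the degree-counting contradiction you give and the second by stripping off the isolated vertices and invoking Lemma~\ref{lem:core} directly on $G^+,H^+$. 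Your unified argument is a little cleaner in that it never needs to separate the infinite $H_0$-bars contributed by isolated vertices from those contributed by non-isolated components, recovering the isolated-node count instead by subtraction from $|V|$; the paper's case~(ii), by contrast, must first verify $|V(G^0)|=|V(H^0)|$ before the reduction to $G^+,H^+$ is legitimate. For the second assertion your pair $C_6$ versus $C_3\sqcup C_3$ is the pair in Figure~\ref{fig:thm_fig}; the paper distinguishes them via ${\rm ph}_{\rm VR}^{0}$ (counting $5$ versus $4$ non-essential bars) while you argue via ${\rm ph}_{\rm VR}^{1}$, and either suffices.
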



\begin{proof}
    Let $G$ and $H$ be two graphs that are distinguishable by the WL test, and fix an injective edge filtration $ef^{\mathcal{C}}$.  We divide the graph ($G$, $H$) into non-isolated components ($G^+$, $H^+$) and isolated components ($G^0$, $H^0$). More precisely, we have 
    \begin{align*}
        V(G^0)&=\{u \in V(G) \text { $|$ } {\rm deg}(u)=0\}, \\ 
        V(H^0)&=\{v \in V(H) \text { $|$ } {\rm deg}(v)=0\}, \\
        V(G^+)&=\{u \in V(G) \text { $|$ } {\rm deg}(u)>0\}, \\
        V(H^+)&=\{v \in V(H) \text { $|$ } {\rm deg}(v)>0\}. 
    \end{align*}
    Then $V(G)=V(G^0) \sqcup V(G^+)$ and $V(H)=V(H^0) \sqcup V(H^+)$. If $|V(G)| \neq |V(H)|$, then ${\rm ph}_{\rm VR}^{0}(G, ef^{\mathcal{C}}) \neq {\rm ph}_{\rm VR}^{0}(H, ef^{\mathcal{C}})$ clearly holds. So we may assume that $|V(G)|=|V(H)|$. We divide it into two cases: $(i)$ $|V(G^0)| \neq |V(H^0)|$ and $(ii)$ $|V(G^0)| = |V(H^0)|$.  
    
    First, assume the case $(i)$. For a contradiction, suppose not, that is, ${\rm ph}_{\rm VR}^{k}(G, ef^{\mathcal{C}}) = {\rm ph}_{\rm VR}^{k}(H, ef^{\mathcal{C}})$ for all $k=0, 1$. Since $V(G^0)$ and $V(H^0)$ are isolated, the multi-set of all non-essential points (that is, $(0, t)$ where $t \neq \infty$) of ${\rm ph}_{\rm VR}^{0}(G, ef^{\mathcal{C}})$ (and ${\rm ph}_{\rm VR}^{0}(H, ef^{\mathcal{C}})$ respectively) and the multi-set ${\rm ph}_{\rm VR}^{1}(G, ef^{\mathcal{C}})$ (and ${\rm ph}_{\rm VR}^{1}(H, ef^{\mathcal{C}})$ respectively) should come from $G^+$ (and $H^+$ respectively). Hence $E^{\mathcal{C}}(G^+)$ and $E^{\mathcal{C}}(H^+)$ should be the same as multi-sets, which implies that
    \begin{equation} \label{eq:pos_deg_eq}
        \sum_{\substack{u \in V(G^+), \\ \mathcal{C}(u)=C}} {\rm deg}(u) = \sum_{\substack{v \in V(H^+), \\ \mathcal{C}(v)=C}} {\rm deg}(v)
    \end{equation}
    for any color $C$. However, since $|V(G^0)| \neq |V(H^0)|$, we have $|V(G^+)| \neq |V(H^+)|$. Thus there exists a color $C_0$ such that $|\{u \in V(G^+) \text{ $|$ } \mathcal{C}(u)=C_0\}| \neq |\{v \in V(H^+) \text{ $|$ } \mathcal{C}(v)=C_0\}|$. Since the WL test satisfies the degree assumption, all $u \in V(G^+)$ and $v \in V(H^+)$ whose WL color is $C_0$ should have the same positive degree, which we call it $d$. Hence 
    \begin{align*}
        \sum_{\substack{u \in V(G^+), \\ \mathcal{C}(u)=C_0}} {\rm deg}(u) &= d \cdot |\{u \in V(G^+) \text{ $|$ } \mathcal{C}(u)=C_0\}| \\
        &\neq d \cdot |\{v \in V(H^+) \text{ $|$ } \mathcal{C}(v)=C_0\}| = \sum_{\substack{v \in V(H^+), \\ \mathcal{C}(v)=C_0}} {\rm deg}(v),    
    \end{align*}
    which contradicts to Equation \ref{eq:pos_deg_eq}. Hence 
    \begin{equation*}
        ({\rm ph}_{\rm VR}^{0}(G, ef^{\mathcal{C}}), \text{ } {\rm ph}_{\rm VR}^{1}(G, ef^{\mathcal{C}})) \neq ({\rm ph}_{\rm VR}^{0}(H, ef^{\mathcal{C}}), \text{ } {\rm ph}_{\rm VR}^{1}(H, ef^{\mathcal{C}}))
    \end{equation*} 
    for the case $(i)$. 

    Now it remains to prove the case $(ii)$ for the first statement. Since $|V(G^0)| = |V(H^0)|$ and all the isolated nodes of $G$ (and $H$ respectively) correspond to the essential points (that is, $(0, \infty)$) in ${\rm ph}_{\rm VR}^{0}(G, ef^{\mathcal{C}})$ (and ${\rm ph}_{\rm VR}^{0}(H, ef^{\mathcal{C}})$ respectively), ${\rm ph}_{\rm VR}^{k}(G, ef^{\mathcal{C}}) \neq {\rm ph}_{\rm VR}^{k}(H, ef^{\mathcal{C}})$ is equivalent to ${\rm ph}_{\rm VR}^{k}(G^+, ef^{\mathcal{C}}) \neq {\rm ph}_{\rm VR}^{k}(H^+, ef^{\mathcal{C}})$ for any $k=0, 1$. Moreover, since 
    \begin{equation*}
    \{\mathcal{C}(u) \text{ $|$ } u \in V(G), \text{ } {\rm deg}(u)=0\} \cap \{\mathcal{C}(u) \text{ $|$ } u \in V(G), \text{ } {\rm deg}(u)>0\} = \emptyset, 
    \end{equation*}
    the fact that WL test can distinguish $G$ and $H$ implies that it can also distinguish $G^{+}$ and $H^{+}$. In other words, $G^+$ and $H^+$ have no isolated nodes and can be distinguishable by WL test. Since WL test satisfies the degree assumption, Lemma \ref{lem:core} implies that ${\rm ph}_{\rm VR}^{k}(G^+, ef^{\mathcal{C}}) \neq {\rm ph}_{\rm VR}^{k}(H^+, ef^{\mathcal{C}})$ for some $k=0, 1$. Thus $({\rm ph}_{\rm VR}^{0}(G, ef^{\mathcal{C}}), \text{ } {\rm ph}_{\rm VR}^{1}(G, ef^{\mathcal{C}})) \neq ({\rm ph}_{\rm VR}^{0}(H, ef^{\mathcal{C}}), \text{ } {\rm ph}_{\rm VR}^{1}(H, ef^{\mathcal{C}}))$ holds as desired. 

    \begin{figure}[t]
        \centering
        \includegraphics[width=0.3\textwidth]{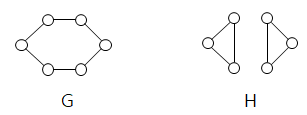}
        \caption{An example of a pair of non-isomorphic graphs that are not distinguishable by WL test.}
        \label{fig:thm_fig}
    \end{figure}

    Next, we will prove the second statement. Let $G$ be the graph on the left of Figure \ref{fig:thm_fig} and $H$ the graph on the right. It is clear that the WL test cannot distinguish them since stable WL colorings of all nodes are identical and the number of nodes of $G$ and $H$ are the same. However, for any injective edge filtration $ef^{\mathcal{C}}$, the number of non-essential elements of ${\rm ph}_{\rm VR}^{0}(G, ef^{\mathcal{C}})$ is 5 while the number of non-essential elements of ${\rm ph}_{\rm VR}^{0}(H, ef^{\mathcal{C}})$ is 4. Thus ${\rm ph}_{\rm VR}^{0}(G, ef^{\mathcal{C}}) \neq {\rm ph}_{\rm VR}^{0}(H, ef^{\mathcal{C}})$, that is, $({\rm ph}_{\rm VR}^{0}(\cdot, ef^{\mathcal{C}}), \text{ } {\rm ph}_{\rm VR}^{1}(\cdot, ef^{\mathcal{C}}))$ can distinguish $G$ and $H$ as desired. 
\end{proof}

\section{Algorithm: Line Graph Vietoris-Rips Persistence Diagram} \label{section:lgvr}

In this section, we propose a novel neural-network-based algorithm, named \emph{Line Graph Vietoris-Rips Persistence Diagram}, to implement TED. First, we construct the map $t_{\phi}$ that transforms a colored graph into a \emph{colored line graph} (Definition~\ref{defn:colored_line_graph}) in Section~\ref{subsection:construction}. Next, through $t_{\phi}$, we propose an algorithm to construct an injective edge filtration, which is the core of our Line Graph Vietoris-Rips Persistence Diagram, in Section~\ref{subsection:lgvr_construction}. Finally, we analyze its theoretical expressivity in Section \ref{subsection:theoretical_LGVR}.

\subsection{Construction of the map $t_{\phi}:(\mathcal{G}, \mathcal{C}) \rightarrow (L_{\mathcal{G}}, \mathcal{C}^{\phi})$} \label{subsection:construction} 

First of all, we briefly recall \emph{line graph} $L_{G}$ of a graph $G \in \mathcal{G}$. In graph theory, a line graph is a type of graph where the vertices correspond to the edges of a given graph $G$, and two vertices in the line graph are connected by an edge if and only if their corresponding edges in $G$ share a common endpoint. Formally, it is defined as follows:

\begin{definition} \label{defn:line_graph}
Let $G=(V(G), E(G)) \in \mathcal{G}$ be a graph. Its line graph $L_G=(V(L_G), E(L_G))$ is a graph such that (1)) each node in $V(L_G)$ represents an edge in $E(G)$, and (2) for any $\{u_1, u_2\} \neq \{v_1, v_2\} \in E(G)$ with $u_1, u_2, v_1, v_2 \in V(G)$, their corresponding nodes in $V(L_G)$ are adjacent if and only if $u_i=v_j$ for some $i=1, 2$ and $j=1, 2$.
\end{definition}

Throughout this paper, we will write the node of the line graph $L_G$ corresponding to $\{\{u,v\}\} \in E(G)$ as $l_{\{\{u,v\}\}}$. Moreover, we denote $L_{\mathcal{G}}:=\{L_{G} \text{ }|\text{ } G \in \mathcal{G}\}$. Now we first propose a \emph{colored line graph}, which is a coloring version of a line graph (Definition~\ref{defn:line_graph}). This is defined using the node coloring $\mathcal{C}$ of a given colored graph $(G, \mathcal{C})$. 

\begin{definition} \label{defn:colored_line_graph}
    Let $(G, \mathcal{C})$ be a graph with node coloring $\mathcal{C}$. A \emph{colored line graph $(L_G, \mathcal{C}^{h})$ of $(G, \mathcal{C})$ with respect to $h$} is the line graph $L_G$ of $G$ with the node coloring $\mathcal{C}^{h}$ such that for any $l_{\{\{u, v\}\}} \in V(L_G)$, $\mathcal{C}^{h}(l_{\{\{u, v\}\}})=h(\{\{\mathcal{C}(u), \mathcal{C}(v)\}\})$, where $l_{\{\{u, v\}\}}$ is a node in $L_G$ corresponding to $\{\{u, v\}\} \in E(G)$ and $h$ is a hash map on $\{\text{ } \{\{\mathcal{C}(u), \mathcal{C}(v)\}\} \in E^{\mathcal{C}}(G) \text{ $|$ } u, v \in V(G), G \in \mathcal{G}\}$.
\end{definition}

In order to elaborate our algorithm, \emph{Line Graph Vietoris-Rips (LGVR) Persistence Diagram}, we first construct a map $t_{\phi}$ that transforms $(G, \mathcal{C}) \in (\mathcal{G}, \mathcal{C})$ into a colored line graph $(L_G, \mathcal{C}^{\phi}) \in (L_{\mathcal{G}}, \mathcal{C}^{\phi})$. Recall that $\chi$ denotes a space of node features of $(\mathcal{G}, \mathcal{C})$ containing $(0,\dots,0)$, where $N \in \mathbb{N}$. Let $\mathcal{M}_{\chi}(2)=\{\{\{x, y\}\} \text{ $|$ } x, y \in \chi\}$, and let $m$ be a multi-layer perceptron with learnable parameters. Now, define a map 
\begin{equation*}
    \phi: \mathcal{M}_{\chi}(2) \rightarrow \mathbb{R}^{2N}, \text{ } \{\{x, y\}\} \mapsto (x+\eta \cdot m(x) + y + \eta \cdot m(y), \text{ }|x+\eta \cdot m(x) - y - \eta \cdot m(y)|),
\end{equation*}
where $\eta$ is a learnable scalar parameter. Then we can define the map $t_{\phi}: (\mathcal{G}, \mathcal{C}) \rightarrow (L_{\mathcal{G}}, \mathcal{C}^{\phi})$ as follows: for a given $(G, \mathcal{C}) \in (\mathcal{G}, \mathcal{C})$, let $l_{\{\{u, v\}\}}$ be the node in $L_{G}$ corresponding to $\{\{u, v\}\} \in E(G)$. Since $\{\{\mathcal{C}(u), \mathcal{C}(v)\}\} \in E^{\mathcal{C}}(G) \subseteq \mathcal{M}_{\chi}(2)$, we can assign a coloring $\phi(\{\{\mathcal{C}(u), \mathcal{C}(v)\}\})$ to a node $l_{\{\{u, v\}\}} \in V(L_{G})$, that is,  
\begin{equation*}
    \mathcal{C}^{\phi}(l_{\{\{u, v\}\}}):=\phi(\{\{\mathcal{C}(u), \mathcal{C}(v)\}\}).
\end{equation*} 
Since each edge in $G$ has a corresponding node in $L_{G}$, $t_{\phi}$ can be well-defined by assigning colors to all the nodes in $L_{G}$ via $\phi$. 

\subsection{Construction of Line Graph Vietoris-Rips (LGVR) Persistence Diagram} \label{subsection:lgvr_construction}

\begin{figure}[t]
    \centering
    \includegraphics[width=0.95\textwidth]{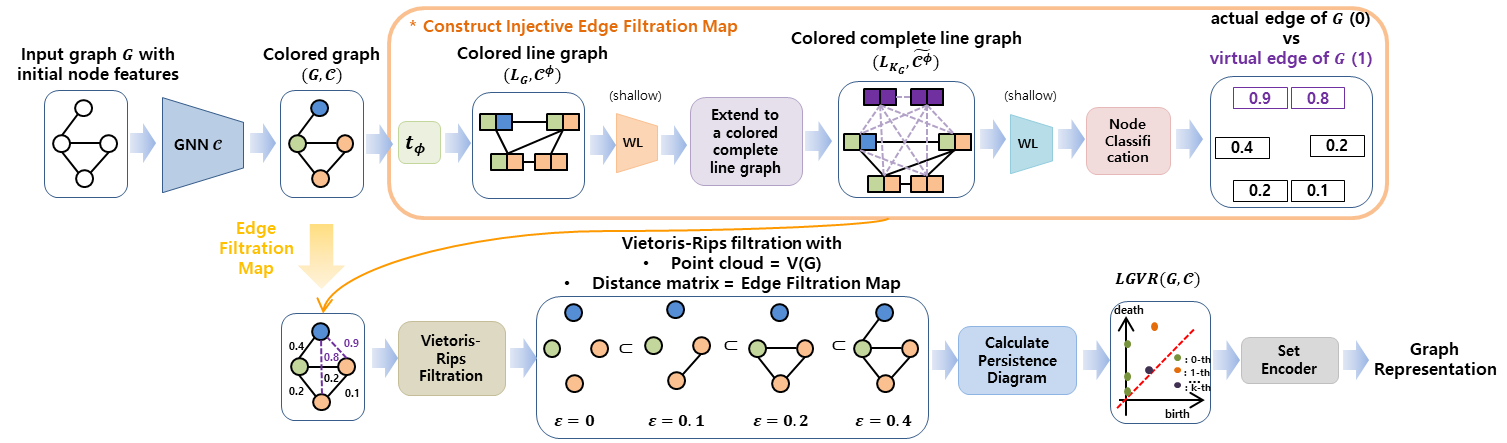}
    \caption{An overall framework of Algorithm~\ref{algo:lgvr_algo}. Note that the map $t_{\phi}$ transforms a colored graph $(G, \mathcal{C})$ into a colored line graph $(L_{G}, \mathcal{C}^{\phi})$ as described in {\bf Construction of the map $t_{\phi}$}.}
    \vspace{-1\baselineskip}
    \label{fig:overall_lgvr}
\end{figure}

\begin{figure}[t]
\begin{algorithm}[H]
\caption{Line Graph Vietoris-Rips (LGVR) Persistence Diagram of $(G, \mathcal{C})$}\label{algo:lgvr_algo}
\begin{footnotesize}
\begin{algorithmic}
    \Input{Labeled graph dataset $(G, y_{G}) \in (\mathcal{G}, y_{\mathcal{G}})$}
    \Initialize{\strut 
    $\mathcal{C}(G|\theta^{\mathcal{C}}) \gets $ GNN with weight $\theta^{\mathcal{C}}$ \\ 
    $t_{\phi}((G, \mathcal{C})|\theta^{t_{\phi}}) \gets $ a map with weight $\theta^{t_{\phi}}$ in  {\bf Construction of the map $t_{\phi}$} \\
    $P_{1}(G|\theta^{P_1}) \gets $ (shallow) node-level GINs with weights $\theta^{P_{1}}$ \\
    $P_{2}(G|\theta^{P_2}) \gets $ (shallow) $[0,1]$-valued node-level GINs with weights $\theta^{P_{2}}$ \\
    $S(\cdot|\theta^{S}) \gets $ a set encoder with weight $\theta^{S}$ \Comment{See Appendix~\ref{subsection:set_encoder}} \\ 
    $m_{\rm task}(\cdot|\theta^{m_{\rm task}}) \gets $ a task-specific MLP with weight $\theta^{m_{\rm task}}$
    }
    \For{T = 1 to \text{total iteration}}
        \State Sample a mini-batch $\{(G_1, y_{G_1}),\dots,(G_n, y_{G_n})\}$ of size $n$ from $(\mathcal{G}, y_{\mathcal{G}})$
        \For{t = 1 to $n$}
            \State $(G_t, \mathcal{C}) \gets \mathcal{C}(G_t | \theta^{\mathcal{C}})$ and construct the edge filtration matrix $A_{G_t}^{\mathcal{C}}$ in {\bf Supplementary}
            \State ${\rm LGVR}(G_t, \mathcal{C}) \gets ({\rm ph}^{0}(\{{\rm VR}_{1}^{\varepsilon}(V(G_t), \text{ } A_{G_t}^{\mathcal{C}})\}_{\varepsilon \in [0, 0.5]}), {\rm ph}^{1}(\{{\rm VR}_{1}^{\varepsilon}(V(G_t), A_{G_t}^{\mathcal{C}})\}_{\varepsilon \in [0, 0.5]}))$
        \EndFor
        \State Update all the weights $\theta^{\mathcal{C}}$, $\theta^{t_{\phi}}$, $\theta^{P_{1}}$, $\theta^{P_{2}}$, $\theta^{S}$, $\theta^{m_{\rm task}}$ by minimizing the loss 
        \begin{gather*}
            \frac{1}{n} \cdot \sum_{t=1}^{n}(\mathcal{L}_{\rm task}(y_{G_t}, m_{\rm task}(S({\rm LGVR}(G_t, \mathcal{C})|\theta^{S})|\theta^{m_{\rm task}})) + \lambda \cdot \mathcal{L}_{\rm LGVR}(G_t)),
        \end{gather*}
        \State where $\lambda$ is a hyperparameter, $\mathcal{L}_{\rm task}$ is the task-specific loss, and 
        \begin{gather*}
            \mathcal{L}_{\rm LGVR}(G_t)= \frac{1}{|V(L_{K_{G_t}})|} \cdot \sum_{l_{\{\{u_x, u_y\}\}} \in V(L_{K_{G_t}})}(\widetilde{\mathcal{C}^{\phi}}(l_{\{\{u_x, u_y\}\}}) - 1_{l_{\{\{u_x, u_y\}\}} \notin V(L_{G_t})})^2.
        \end{gather*}
    \EndFor
\Algphase{Supplementary - Construction of Edge Filtration Matrix $A_{G}^{\mathcal{C}}$ of $(G, \mathcal{C})$ of size $|V(G)| \times |V(G)|$}
    \State $(L_{G}, \mathcal{C}^{\phi}) \gets P_1(t_{\phi}((G, \mathcal{C})|\theta^{t_{\phi}})|\theta^{P_1})$
    \State $K_{G} \gets \text{the complete graph of } V(G)$
    \State $\pi_{G} \gets \text{the natural inclusion map: } V(L_{G})=E(G) \hookrightarrow E(K_{G})=V(L_{K_{G}})$
    \For{$v \in V(L_{K_{G}})$}
        \State ${\widetilde{\mathcal{C}^{\phi}_{0}}(v) \gets
            \begin{cases}
                \mathcal{C}^{\phi}(u) & \text{if $v=\pi_{G}(u)$ for some $u \in V(L_G)$,}  \\
                (0, \dots, 0) & \text{otherwise.}
            \end{cases}
        }$
    \EndFor
    \State Extend $(L_{G}, \mathcal{C}^{\phi})$ to $(L_{K_{G}}, \widetilde{\mathcal{C}^{\phi}})$, and then $(L_{K_{G}}, \widetilde{\mathcal{C}^{\phi}}) \gets P_2((L_{K_{G}}, \widetilde{\mathcal{C}^{\phi}_{0}})|\theta^{P_{2}})$ \Comment{See Appendix~\ref{subsection:extension_to_complete}}
    \For{$i, j \in \{1, \dots, |V(G)|\}$ corresponding to $u_i, u_j \in V(G)$ with $l_{\{\{u_i, u_j\}\}} \in V(L_{K_G})$,}
        \State ${(A_{G}^{\mathcal{C}})_{i,j} \gets
            \begin{cases}
                0 & \text{if $i=j$,}  \\
                \widetilde{\mathcal{C}^{\phi}}(l_{\{\{u_i, u_j\}\}}) & \text{otherwise}
            \end{cases}
        }$
        \Comment{Edge Filtration Map $\widetilde{\mathcal{C}^{\phi}}(\cdot)$ and Matrix $A_{G}^{\mathcal{C}}$ of $(G, \mathcal{C})$}
    \EndFor
\end{algorithmic}
\end{footnotesize}
\end{algorithm}
\vspace{-2\baselineskip}
\end{figure}

In this section, we will elaborate on a neural network-based algorithm, named \emph{Line Graph Vietoris-Rips (LGVR) Persistence Diagram}, with the same expressivity as TED. Pseudocode for the construction of LGVR is described in Algorithm~\ref{algo:lgvr_algo} (See Figure~\ref{fig:overall_lgvr} for the overall framework). Here, we briefly explain it. 

The core of LGVR is to construct an injective edge filtration matrix $A_G^{\mathcal{C}}$ that contains distance information between nodes in a colored graph $(G, \mathcal{C})$ (See orange box in Figure~\ref{fig:overall_lgvr}). To do this, we first convert $(G, \mathcal{C})$ into a colored line graph $(L_G, \mathcal{C}^{\phi})$ through a map $t_{\phi}$. Now that the edge information of $G$ has been converted into the node information of $L_G$, we perform the binary node classification task (actual vs virtual) so that the nodes of $L_G$ corresponding to the actual edges of $G$ have a value of $0$. However, since all nodes in $L_G$ correspond to actual edges in $G$, they are all trained to have values sufficiently close to $0$ during node classification, which can hinder the acquisition of rich information. To address this, we extend $(L_G, \mathcal{C}^{\phi})$ into a colored complete line graph $(L_{K_{G}}, \widetilde{\mathcal{C}^{\phi}})$, and perform the task on $(L_{K_{G}}, \widetilde{\mathcal{C}^{\phi}})$. In this way, we extract meaningful edge information by adding virtual edges to $G$ and training to distinguish actual edges from virtual ones (Appendix~\ref{subsection:extension_to_complete}). Based on the extracted edge information of $G$, we construct the matrix $A_G^{\mathcal{C}}$. Finally, ${\rm LGVR}(G,\mathcal{C})$ is defined as the persistence diagram of $\{{\rm VR}^{\varepsilon}_{1}(V(G), A_{G}^{\mathcal{C}})\}_{\varepsilon \in [0, 0.5]}$ (See Appendix~\ref{subsubsection:range_of_varepsilon} for $\varepsilon \in [0, 0.5]$). Similar to conventional GNNs, LGVR is trained in an end-to-end fashion with task-specific loss $\mathcal{L}_{\rm task}$ (for example, cross entropy) and the LGVR loss $\mathcal{L}_{\rm LGVR}$ in Algorithm~\ref{algo:lgvr_algo}. 

We conclude this section with the complexity analysis of LGVR. Since the complexity of LGVR is dominated by the calculation of persistence homology, we will focus on explaining the computational complexity of dimensions $0$ and $1$. In short, they can be computed efficiently with a worst-case complexity of $\mathcal{O}(m\alpha(m)))$ for a graph with $m$ sorted edges (\cite{comp_top}), where $\alpha(\cdot)$ refers to the inverse Ackermann function, which can be considered practically constant for all purposes. Therefore, the computation of persistence homology is mainly affected by the complexity of sorting all edges, which is $\mathcal{O}(m\log(m)))$.

\subsection{Theoretical Expressivity of LGVR} \label{subsection:theoretical_LGVR}

The most crucial point in LGVR is whether the edge filtration in Algorithm~\ref{algo:lgvr_algo} can be injective. In this regard, we will first state and prove the most essential lemma that demonstrates the injectivity of the edge filtration used in LGVR as follows. 

\begin{lemma} \label{lem:injectivity}
    Assume that $\chi$ is countable, and let $\mathcal{M}_{\chi}(2)=\{\{\{x, y\}\} \text{ $|$ } x, y \in \chi\}$. Then there exists an injective map 
    \begin{equation*}
    \phi: \mathcal{M}_{\chi}(2) \rightarrow \tilde{\chi} \oplus \tilde{\chi} \hookrightarrow \mathbb{R}^{2N},
    \end{equation*}
    where $\tilde{\chi} \hookrightarrow \mathbb{R}^{N}$ and $\tilde{\chi}$ is countable. 
\end{lemma}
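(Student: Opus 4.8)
The plan is to prove the lemma as a pure existence statement, producing $\phi$ in exactly the symmetrized form dictated by the construction of Section~\ref{subsection:construction}, namely $\phi(\{\{x,y\}\}) = \big(g(x)+g(y),\, |g(x)-g(y)|\big)$ where $|\cdot|$ is the coordinatewise absolute value and $g:\chi\to\mathbb{R}^{N}$ plays the role of $x\mapsto x+\eta\,m(x)$. The key idea is to choose $g$ so that its image is effectively one-dimensional: since $\chi$ is countable, fix any injection $f:\chi\to\mathbb{R}$ and set $g(x):=(f(x),0,\dots,0)\in\mathbb{R}^{N}$, which is injective because $f$ is. Then define $\phi$ by the displayed formula; it is well defined on multisets since both coordinates are symmetric in $x$ and $y$, and its image lies in $\tilde\chi\oplus\tilde\chi\hookrightarrow\mathbb{R}^{2N}$ once we take $\tilde\chi$ to be the set of all vectors $(t,0,\dots,0)$ with $t$ of the form $f(x)+f(y)$ or $|f(x)-f(y)|$ for some $x,y\in\chi$; the inclusion $\tilde\chi\hookrightarrow\mathbb{R}^{N}$ is then tautological.

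For injectivity I would argue as follows. Suppose $\phi(\{\{x,y\}\})=\phi(\{\{x',y'\}\})$. Because $g$ is supported on the first coordinate, the equality of the two $\mathbb{R}^{N}$-valued components collapses to two scalar equations, $f(x)+f(y)=f(x')+f(y')=:s$ and $|f(x)-f(y)|=|f(x')-f(y')|=:d$. In dimension one the pair $(s,d)$ determines the unordered pair of summands: $\{f(x),f(y)\}=\{(s-d)/2,(s+d)/2\}=\{f(x'),f(y')\}$, hence $\{\{f(x),f(y)\}\}=\{\{f(x'),f(y')\}\}$ as multisets, and since $f$ is injective this upgrades to $\{\{x,y\}\}=\{\{x',y'\}\}$. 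This gives injectivity of $\phi$, and since $\chi$ is countable the set $\tilde\chi$ above is a countable union of images of the countable set $\chi\times\chi$, hence countable, completing the proof.

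The step I expect to be the genuine obstacle — and the reason for routing $g$ through a single coordinate rather than using an arbitrary injective $g:\chi\to\mathbb{R}^{N}$ — is precisely the injectivity verification when $N\ge 2$. The second component of $\phi$ records only $|g(x)-g(y)|$ coordinatewise, so a naive embedding admits independent sign flips in distinct coordinates that can identify $\{\{g(x),g(y)\}\}$ with a different multiset having the same coordinatewise sum and absolute difference (for instance $\{\{(0,1),(1,0)\}\}$ and $\{\{(0,0),(1,1)\}\}$), which would destroy injectivity of $\phi$. Making $g(\chi)$ one-dimensional eliminates all such cross-coordinate ambiguity and reduces everything to the elementary one-dimensional recovery statement used above; the remaining work — symmetry, the embeddings into Euclidean space, and the countability bookkeeping for $\tilde\chi$ — is routine.
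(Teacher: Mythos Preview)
Your proof is correct and handles the existence claim cleanly. It differs from the paper's approach in one key respect: the paper does \emph{not} discard the original embedding $x\in\chi\subseteq\mathbb{R}^N$ but instead perturbs it, setting $g(x)=x+\varepsilon\,(f(x),\dots,f(x))$ for an injection $f:\chi\to\mathbb{R}$ and a carefully chosen scalar $\varepsilon$. The choice of $\varepsilon$ (avoiding a countable set of bad values) is exactly what neutralises the cross-coordinate sign-flip obstruction you identified: with such an $\varepsilon$, the equality $g(x_1)+g(y_1)=g(x_2)+g(y_2)$ forces the $\chi$-part $x_1+y_1-x_2-y_2$ and the diagonal part $\varepsilon(f^N(x_2)+f^N(y_2)-f^N(x_1)-f^N(y_1))$ to vanish separately, and likewise for the absolute-difference component, after which the argument collapses to the same one-dimensional recovery you use. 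So both proofs ultimately reduce to the scalar identity $\{a,b\}=\{(s\pm d)/2\}$, but the paper reaches it by a perturbation argument while you reach it by projection onto a single coordinate. Your route is more elementary and avoids the $\varepsilon$-selection step entirely; the paper's route has the advantage of matching the algorithmic form $x\mapsto x+\eta\,m(x)$ of Section~\ref{subsection:construction} as a genuine small perturbation retaining $x$ as the leading term, whereas your $g(x)=(f(x),0,\dots,0)$ fits that template only by taking $m(x)=(g(x)-x)/\eta$, which overwrites rather than perturbs the original features.
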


\begin{proof}
    Since $\chi$ is countable, it is easy to see that there exists an injective map $f: \chi \rightarrow \mathbb{R}$ and infinitely many $\varepsilon \in \mathbb{R}$ such that for any $d \in \mathbb{N}$, 
    \begin{equation} \label{eq:empty2}
        \Big(\bigcup_{X \in \mathcal{M}_{\chi}(d)}\{\sum_{\substack{x \in X, \\ rsgn(x)=\pm 1}} rsgn(x) \cdot \varepsilon \cdot (f(x), \dots, f(x)) \in \mathbb{R}^{N}\}\Big) \cap \chi = \{\vec{0}\}.
    \end{equation}
    By fixing a desired $f$ and $\varepsilon$, we put $\tilde{\chi}=\{x + \varepsilon \cdot (f(x), \dots, f(x)) \in \mathbb{R}^{N} \text{ $|$ } x \in \chi\}$. It is trivial that $\tilde{\chi}$ is countable. 

    Now consider the map $\phi: \mathcal{M}_{\chi}(2) \rightarrow \tilde{\chi} \oplus \tilde{\chi}$,
    \begin{equation*}
    \{\{x, y\}\} \mapsto (x+\varepsilon \cdot f^{N}(x) + y + \varepsilon \cdot f^{N}(y),\text{ } |x+\varepsilon \cdot f^{N}(x) - y - \varepsilon \cdot f^{N}(y)|),
    \end{equation*}
    where $f^{N}(x)=(f(x), \dots, f(x)) \in \mathbb{R}^{N}$. Note that $\phi$ is well-defined since it is order-invariant. Hence it remains to show that $\phi$ is injective. For the injectivity, assume that
    \begin{align} \label{eq:inj}
        \begin{split}
            (x_1+\varepsilon \cdot f^{N}(x_1) + y_1 + \varepsilon \cdot f^{N}(y_1),&\text{ } |x_1 + \varepsilon \cdot f^{N}(x_1) - y_1 - \varepsilon \cdot f^{N}(y_1)|) \\
            = (x_2+\varepsilon \cdot f^{N}(x_2) + y_2 + \varepsilon \cdot f^{N}(y_2),&\text{ } |x_2+\varepsilon \cdot f^{N}(x_2) - y_2 - \varepsilon \cdot f^{N}(y_2)|)
        \end{split}
    \end{align}
    for some $x_1, x_2, y_1, y_2 \in \chi$. We need to show that $\{\{x_1, y_1\}\}=\{\{x_2, y_2\}\}$ holds under (\ref{eq:inj}). Note that the first equality of (\ref{eq:inj}) implies that $x_1 + y_1 - x_2 - y_2 = \varepsilon \cdot (f^N(x_2) + f^N(y_2) - f^N(x_1) - f^N(y_1))$. By (\ref{eq:empty2}), both sides should be $0$. Hence $f^N(x_1) + f^N(y_1) = f^N(x_2) + f^N(y_2)$, which implies that 
    \begin{equation} \label{eq:add}
        f(x_1)+f(y_1)=f(x_2)+f(y_2).    
    \end{equation} 
    Moreover, the second equality of (\ref{eq:inj}) implies that $x_1 + \varepsilon \cdot f^{N}(x_1) - y_1 - \varepsilon \cdot f^{N}(y_1) = \delta \cdot (x_2+\varepsilon \cdot f^{N}(x_2) - y_2 - \varepsilon \cdot f^{N}(y_2))$, where $\delta=(\delta_1, \dots, \delta_N)$ and 
    \begin{equation*}
        \delta_i=
        \begin{cases}
        1 & \parbox[t]{11cm}{if the signs of $i$-th component of $x_1 + \varepsilon \cdot f^{N}(x_1) - y_1 - \varepsilon \cdot f^{N}(y_1)$ and $x_2+\varepsilon \cdot f^{N}(x_2) - y_2 - \varepsilon \cdot f^{N}(y_2)$ are the same,}  \\
        -1 & \text{otherwise }
        \end{cases}
    \end{equation*}
    for any $i=1, \dots, N$. Again, the same argument as the first equality implies that 
    \begin{equation} \label{eq:subtract}
        f(x_1) - f(y_1) = \delta \cdot (f(x_2) - f(y_2)), \quad \text{that is,} \quad |f(x_1) - f(y_1)| = |f(x_2) - f(y_2)|.
    \end{equation}
    By interchanging $x_i$ and $y_i$, if necessary, we may assume that $f(x_1) \ge f(y_1)$ and $f(x_2) \ge f(y_2)$. Then both equations (\ref{eq:add}) and (\ref{eq:subtract}) imply that $f(x_1)=f(x_2)$ and $f(y_1)=f(y_2)$. Since $f$ is injective, we conclude that $x_1=x_2$ and $y_1=y_2$, that is, $\{\{x_1, y_1\}\}=\{\{x_2, y_2\}\}$, which implies the injectivity of $\phi$ as desired.     
\end{proof}

Lemma~\ref{lem:injectivity} shows that under a countable universe, the map $\phi$ in Section~\ref{subsection:construction} is injective. Moreover, since the subsequent steps also maintain the injectivity, this implies that the edge filtration map $\{\{u_{i},u_{j}\}\} \mapsto \widetilde{\mathcal{C}^{\phi}}(l_{\{\{u_i, u_j\}\}})$ in Algorithm~\ref{algo:lgvr_algo} is inejctive Based on this, we show that LGVR satisfies the essential property (Lemma~\ref{lem:core}, Theorem \ref{thm:wl_vr_comparison}) of TED, which implies that LGVR has the same expressivity as TED.

\begin{theorem} \label{thm:main}
    Assume that $\chi$ is countable. Then the following statements hold:
    \begin{enumerate}
        \item Let $\mathcal{C}$ be an arbitrary node coloring of $\mathcal{G}$ satisfying the degree assumption. For any $G, H \in \mathcal{G}$ with no isolated nodes, if $\{\{\mathcal{C}(v) \text{ $|$ } v \in V(G)\}\} \neq \{\{\mathcal{C}(u) \text{ $|$ } u \in V(H)\}\}$, then ${\rm LGVR}(G, \mathcal{C}) \neq {\rm LGVR}(H, \mathcal{C})$. 
        \item Let $\mathcal{C}$ be the stable WL coloring whose initial node colorings are all the same. If $G, H \in \mathcal{G}$ are distinguishable by $\mathcal{C}$, then we have ${\rm LGVR}(G, \mathcal{C}) \neq {\rm LGVR}(H, \mathcal{C})$. Moreover, there exists a pair of graphs $(G, H) \in \mathcal{G} \times \mathcal{G}$ such that ${\rm LGVR}(G, \mathcal{C}) \neq {\rm LGVR}(H, \mathcal{C})$. 
    \end{enumerate}
\end{theorem}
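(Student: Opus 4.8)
The plan is to show that the edge filtration implicitly constructed inside Algorithm~\ref{algo:lgvr_algo} can be realized, for a suitable choice of weights, as a genuine \emph{injective} edge filtration $ef^{\mathcal{C}}$ in the sense of Definition~\ref{defn:edge_filtration}, so that ${\rm LGVR}(G,\mathcal{C})$ coincides with ${\rm TED}(G, ef^{\mathcal{C}})$ for every $G\in\mathcal{G}$; once this identification is in place, Part~1 of the theorem is exactly Lemma~\ref{lem:core} and Part~2 is exactly Theorem~\ref{thm:wl_vr_comparison}, both applied to this particular $ef^{\mathcal{C}}$.

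First I would unwind the ``Supplementary'' block of Algorithm~\ref{algo:lgvr_algo}. The map sending a colored edge $\{\{\mathcal{C}(u_i),\mathcal{C}(u_j)\}\}$ of $G$ to the scalar $\widetilde{\mathcal{C}^{\phi}}(l_{\{\{u_i,u_j\}\}})$ is the composition of: (i) the map $\phi$ on $\mathcal{M}_{\chi}(2)$, which by Lemma~\ref{lem:injectivity} (using countability of $\chi$) can be chosen injective with image avoiding $(0,\dots,0)$; (ii) the node-level GIN $P_1$ on the colored line graph; (iii) the inclusion $\pi_G$ into the complete line graph $L_{K_G}$, padding virtual nodes with $(0,\dots,0)$; and (iv) the $[0,1]$-valued node-level GIN $P_2$. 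Since $\chi$, hence the set of colored-edge labels $\bigcup_{G}E^{\mathcal{C}}(G)$, is countable, the standard injective-multiset-function argument for GINs (\cite{gin}) provides a choice of weights for $P_1$ and $P_2$ (for instance reducing them to per-node MLPs, which already preserves the injectivity supplied by $\phi$ and separates the zero feature of the virtual nodes from the others) under which this composition is a well-defined injective function of the colored edge $\{\{\mathcal{C}(u_i),\mathcal{C}(u_j)\}\}$ alone, taking values in $(0,\tfrac12)$ on actual edges of $G$ and values $\ge\tfrac12$ on virtual edges of $K_G$; the latter is consistent with $\mathcal{L}_{\rm LGVR}$, which trains virtual-edge nodes toward $1$ and actual-edge nodes toward $0$. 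Write $ef^{\mathcal{C}}$ for the resulting map: it is positive, bounded, and injective, hence an injective edge filtration.

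With these weights, the matrix $A_G^{\mathcal{C}}$ agrees with the distance matrix $M(G)$ of Definition~\ref{defn:edge_filtration}-2 on every pair carrying an actual edge and assigns value $\ge\tfrac12$ to every non-edge; since $\{{\rm VR}_1^{\varepsilon}(V(G),A_G^{\mathcal{C}})\}_{\varepsilon\in[0,\tfrac12]}$ only sees simplices born at parameter $<\tfrac12$, and $1$-skeletal Vietoris--Rips complexes produce no deaths in degree $1$, the pair $({\rm ph}^0,{\rm ph}^1)$ over $\varepsilon\in[0,\tfrac12]$ is exactly ${\rm TED}(G, ef^{\mathcal{C}})$ (non-edges playing the role of the $\infty$ entries in Definition~\ref{defn:edge_filtration}). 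Thus ${\rm LGVR}(G,\mathcal{C})={\rm TED}(G,ef^{\mathcal{C}})$ for all $G$. Part~1 now follows immediately from Lemma~\ref{lem:core}, since $\mathcal{C}$ satisfies the degree assumption and $G,H$ have no isolated nodes. For Part~2, $ef^{\mathcal{C}}$ is an injective edge filtration and $\mathcal{C}$ is the stable WL coloring with identical initial colors, so the first assertion is the first assertion of Theorem~\ref{thm:wl_vr_comparison}, and the existence of a pair $(G,H)$ distinguished by LGVR but not by the WL test is witnessed by the same pair as in Figure~\ref{fig:thm_fig}, whose argument only compares the number of non-essential degree-$0$ points, all born before $\tfrac12$. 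The step I expect to require the most care is the second one: exhibiting weights for $P_1$ and $P_2$ that keep the colored-edge-to-scalar map injective while cleanly separating the $(0,\tfrac12)$ range of real edges from the $\ge\tfrac12$ range of virtual ones, given that the complete-line-graph padding reuses the zero feature of $\chi$; this is precisely where the countability hypothesis on $\chi$, Lemma~\ref{lem:injectivity}, and GIN injectivity do the work, after which everything reduces to Lemma~\ref{lem:core} and Theorem~\ref{thm:wl_vr_comparison}.
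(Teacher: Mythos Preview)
Your proposal is correct and follows essentially the same approach as the paper: establish that the edge filtration built inside Algorithm~\ref{algo:lgvr_algo} can be made injective (via Lemma~\ref{lem:injectivity} and a suitable choice of network weights, which the paper phrases as universal approximation plus the fact that WL message passing does not decrease distinguishing power), then invoke Lemma~\ref{lem:core} for Part~1 and Theorem~\ref{thm:wl_vr_comparison} for Part~2. You are in fact more explicit than the paper on two points it leaves implicit---why restricting $\varepsilon$ to $[0,\tfrac12]$ still recovers ${\rm TED}$, and why the composition through $P_1,P_2$ can be arranged to depend only on the colored edge---so the care you flag in the final paragraph is well placed but does not change the overall strategy.
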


\begin{proof}
    First, we prove $(1)$. Lemma \ref{lem:injectivity} and the universal approximation theorem of multi-layer perceptrons (\cite{universal1, universal2}) imply that our map 
    \begin{equation*}
        \phi: \mathcal{M}_{\chi}(2) \rightarrow \mathbb{R}^{2N}, \quad \{\{x, y\}\} \mapsto (x+\varepsilon \cdot m_{\mathcal{G}}^{*}(x) + y + \varepsilon \cdot m_{\mathcal{G}}^{*}(y), \text{ }|x+\varepsilon \cdot m_{\mathcal{G}}^{*}(x) - y - \varepsilon \cdot m_{\mathcal{G}}^{*}(y)|)
    \end{equation*} 
    in Section~\ref{subsection:construction} induces the injective edge filtration of $\mathcal{G}$ with respect to $\mathcal{C}$. Moreover, since the WL message passing scheme does not decrease the expressive power in distinguishing node colorings of line graphs, the edge filtration of the LGVR that derives the positive symmetric matrix $A_{G}^{\mathcal{G}}$ is injective. Finally, since $G$ and $H$ are graphs with no isolated nodes and the coloring $\mathcal{C}$ satisfies the degree assumption, the desired result is a direct consequence of Lemma \ref{lem:core}.

    Next assume that $\mathcal{C}$ is the stable WL coloring whose initial node colorings are all the same. As mentioned in the proof above, we know that the edge filtration of the LGVR that derives the positive symmetric matrix $A_{G}^{\mathcal{C}}$ is injective. Hence the statement $(2)$ can be proved similarly as the proof of Theorem~\ref{thm:wl_vr_comparison}. 
\end{proof}

\section{Model Framework} \label{section:model_framework}

From Theorem~\ref{thm:main}, we can infer two important results regarding LGVR: (1) LGVR can preserve arbitrary node coloring information, and (2) especially for WL type coloring, LGVR has stronger expressive powers, which enables the construction of more powerful topological GNNs. In this section, we will delve deeper into the construction of topological GNNs. Before providing a framework for applying LGVR to GNN, we first remark one important aspect of LGVR. From a theoretical perspective, LGVR can guarantee the expressive powers of node colorings due to the injectivity of edge filtration. However, from a practical perspective, if specific node information plays a crucial role in determining the characteristics of a graph due to its rich node coloring information, it is likely that LGVR may not extract a graph representation that properly reflects this since LGVR can only indirectly use the node coloring information by converting it into other (topological) information.

To bridge this gap between theoretical and practical perspectives, we first propose a simple mathematical technique that integrates the expressive powers of both coloring information and topological information induced by LGVR under a countable universe in Section~\ref{subsection:integration}. Depending on the application of this technique, we propose two topological model frameworks in Section~\ref{subsection:model_architectures} and analyze their expressivities in Section~\ref{subsection:model_expressivity}.

\subsection{Integration Technique} \label{subsection:integration}

In this subsection, we will present a general mathematical method for constructing an embedding space that integrates the expressive power of each representation under a countable universe. Proposition \ref{prop:union} and Corollary~\ref{cor:union} propose a framework that preserves all expressive powers of representations by finding $|I|$ integrated embedding maps $f_i$. This integration technique will be used for the construction of one of our model frameworks in Section \ref{subsection:model_architectures} in order to integrate the coloring information of $\mathcal{C}$ and the topological information of LGVR simultaneously. 

First, we state and prove a useful lemma that is crucial to prove Proposition \ref{prop:union}.

\begin{lemma} \label{lem:empty_intersection}
Given $m \in \mathbb{N}$ and countable spaces $\{\chi_{i}\}_{i \in I}$, where $I$ is an index set satisfying $|I| < \infty$, there exists a family of functions $\{f_{i}:\chi_{i} \rightarrow \mathbb{R}^{m}\}_{i \in I}$ so that 
\begin{equation*}
\bigcap_{i \in I}\{f_{i}(x_{i})\text{ }|\text{ }x_{i} \in \chi_{i}\} = \emptyset.
\end{equation*}
\end{lemma}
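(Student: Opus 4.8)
The plan is to use a single countable set on which all the $\chi_i$ can be injected, and then translate the images apart so that they become pairwise disjoint — in fact disjoint in a single coordinate already suffices to make the total intersection empty. Concretely, since each $\chi_i$ is countable and $|I|<\infty$, I would first fix for every $i\in I$ an injection $g_i:\chi_i\hookrightarrow\mathbb{R}$ (possible because $\mathbb{R}$ contains a copy of any countable set). Enumerate $I=\{1,\dots,k\}$ for notational convenience.

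Next I would define $f_i:\chi_i\to\mathbb{R}^m$ by pushing the image of $g_i$ into a ``band'' of the real line that is disjoint from the bands used for the other indices. One clean way: the set $g_i(\chi_i)$ is a countable subset of $\mathbb{R}$, hence bounded away from nothing in general, so instead of shifting I would compose with a homeomorphism of $\mathbb{R}$ onto a bounded interval. Let $\sigma:\mathbb{R}\to(0,1)$ be, say, $\sigma(t)=\frac{1}{2}+\frac{1}{\pi}\arctan(t)$, which is injective. Then $\sigma\circ g_i:\chi_i\to(0,1)$ is injective, and I can translate: set
\begin{equation*}
f_i(x) = \bigl(\,i + \sigma(g_i(x)),\ 0,\ \dots,\ 0\,\bigr)\in\mathbb{R}^m .
\end{equation*}
The first coordinate of $f_i(x)$ lies in the open interval $(i,i+1)$, and these intervals $(1,2),(2,3),\dots,(k,k+1)$ are pairwise disjoint. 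Hence for $i\neq j$ the image sets $\{f_i(x):x\in\chi_i\}$ and $\{f_j(x):x\in\chi_j\}$ are already disjoint (they differ in their first coordinate), so a fortiori $\bigcap_{i\in I}\{f_i(x_i):x_i\in\chi_i\}=\emptyset$ as soon as $|I|\ge 2$. (If $|I|\le 1$ the statement would be vacuous or false, so the implicit hypothesis is $|I|\ge 2$; I would note this, or simply observe that with $|I|=1$ one can instead take $f_1$ to miss a point, but the intended reading is $|I|\ge2$.)

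There is essentially no obstacle here: the only thing to check is that each $f_i$ is well-defined as a map into $\mathbb{R}^m$ (trivial, padding with zeros since $m\ge 1$) and that the first-coordinate intervals are disjoint (immediate from $i\in\mathbb{Z}$). The mild subtlety worth a sentence is why countability of $\chi_i$ is used at all — it is exactly what guarantees the injection $g_i:\chi_i\hookrightarrow\mathbb{R}$ exists; finiteness of $I$ is used to have finitely many disjoint unit intervals available (any countable index set would in fact still work by spreading the bands over $\mathbb{Z}$, but $|I|<\infty$ is all that is claimed and keeps the argument elementary). I expect the ``hard part'' to be purely expository: stating the construction cleanly and pointing out that disjointness in one coordinate already forces the full intersection to be empty, so that no genuine combinatorial work with $m$-dimensional overlaps is needed.
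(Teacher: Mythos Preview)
Your proof is correct, but it takes a different route from the paper. The paper does not construct the $f_i$ explicitly; instead it starts from \emph{arbitrary} functions $f_1,f_2$ and argues that the difference set $DF=\{f_1(x_1)-f_2(x_2):x_1\in\chi_1,\,x_2\in\chi_2\}$ is countable, so some $\varepsilon\in\mathbb{R}^m\setminus DF$ exists, and then $f_2+\varepsilon$ has image disjoint from that of $f_1$; the general finite-$I$ case is handled by iterating this perturbation. Your approach is more direct and in fact shows that the countability hypothesis is not needed for the lemma as stated (indeed, even the constant maps $f_i\equiv(i,0,\dots,0)$ would do). What the paper's argument buys, however, is the stronger conclusion that \emph{any} given family $\{f_i\}$ can be made to work after translating by suitable scalars $\varepsilon_i$; this is precisely what is used downstream in Corollary~\ref{cor:union}, where the $f_i$ are fixed in advance and only the $\varepsilon_i$ are free. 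So your construction cleanly settles the lemma, while the paper's perturbation method is tailored to the application that follows.
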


\begin{proof}
    First, we prove the case when $|I|=2$. We claim that given any functions $f_{1}: \chi_{1} \rightarrow \mathbb{R}^{m}$ and $f_{2}: \chi_{2} \rightarrow \mathbb{R}^{m}$, there exists $\varepsilon \in \mathbb{R}^{m}$ such that $\{f_{1}(x_{1})\text{ }|\text{ }x_{1} \in \chi_{1}\} \cap \{f_{2}(x_{2}) + \varepsilon \text{ }|\text{ }x_{2} \in \chi_{2}\} = \emptyset$. Choose any two functions $f_{1}: \chi_{1} \rightarrow \mathbb{R}^{m}$ and $f_{2}: \chi_{2} \rightarrow \mathbb{R}^{m}$, and consider the set 
    \begin{equation*}
    DF:=\{f_{1}(x_{1}) - f_{2}(x_{2}) \text{ }|\text{ } x_{1} \in \chi_{1} \text{ and } x_{2} \in \chi_{2}\}.
    \end{equation*}
    Since $\chi_{1}$ and $\chi_{2}$ are countable, so is $DF$. Hence there exists infinitely many $\varepsilon \in \mathbb{R}^{m}$ such that $\varepsilon \notin DF$, which proves the claim. Now, by replacing $f_{2}$ by $f_{2}+\varepsilon$ for some $\varepsilon \notin DF$, it is clear that $f_{1}$ and $f_{2}$ satisfy the desired property. 
    
    For general $I$ with $|I|<\infty$, the same arguments implies the existence of infinitely many $\varepsilon_{i} \in \mathbb{R}^{m}$, $i \in I$ satisfying 
    \begin{equation*}
    \bigcap_{i \in I}\{f_{i}(x_{i})+\varepsilon_{i}\text{ }|\text{ }x_{i} \in \chi_{i}\}=\emptyset
    \end{equation*}
    since the countable union of countable sets are countable. Hence by replacing $f_{i}$ by $f_{i}+\varepsilon$, the result holds. 
\end{proof}

Recall that $\mathcal{M}_{\chi}(d)$ is a space of multi-sets of $\chi$ whose cardinality is $d$, and let $\mathcal{M}_{\chi} = \bigcup_{d \in \mathbb{N}} \mathcal{M}_{\chi}(d)$. With this notation in mind, we can prove the following result. 
 
\begin{proposition} \label{prop:union}
    Given $m \in \mathbb{R}$ and countable spaces $\{\chi_{i}\}_{i \in I}$, where $I$ is an index set satisfying $|I| < \infty$, there exists a family of functions $\{f_{i}:\chi_{i} \rightarrow \mathbb{R}^{m}\}_{i \in I}$ so that 
    \begin{equation*}
    g(X_{1}, \dots, X_{|I|})=\sum_{i \in I} \sum_{x \in X_{i}} f_{i}(x)
    \end{equation*}
    is unique for each $(X_{1},\dots,X_{|I|}) \in \mathcal{M}_{\chi_{1}} \times \dots \times \mathcal{M}_{\chi_{|I|}}$ of bounded sizes. 
\end{proposition}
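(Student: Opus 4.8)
The plan is to build the functions $f_i$ so that the multiset-sum map $g$ simultaneously enjoys two properties: (a) for each fixed $i$, the map $X_i \mapsto \sum_{x \in X_i} f_i(x)$ is injective on bounded-size multisets in $\mathcal{M}_{\chi_i}$, and (b) the images of the individual coordinate sums occupy ``disjoint'' regions so that no collision can arise by trading mass between different indices $i$. Step (b) is exactly the role of Lemma~\ref{lem:empty_intersection}, suitably strengthened to account for bounded multiplicities rather than single elements.

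First I would handle a single countable space $\chi$ and show that one can choose $f:\chi \to \mathbb{R}^m$ (for $m$ large enough, but the statement fixes $m$, so really $f:\chi\to\mathbb{R}$ composed with a fixed embedding $\mathbb{R}\hookrightarrow\mathbb{R}^m$, or more carefully a perturbation argument as in the proof of Lemma~\ref{lem:injectivity}) such that $X \mapsto \sum_{x\in X} f(x)$ is injective on multisets of size at most $B$. This is the standard ``generic linear functional separates finitely many algebraic relations'' trick: enumerate $\chi = \{a_1, a_2, \dots\}$, and observe that two distinct bounded multisets $X \neq X'$ give a nonzero integer combination $\sum_j c_j a_j$ with $\sum_j |c_j|$ bounded; choosing $f$ so that the values $f(a_j)$ are rationally independent (or just avoiding the countably many bad hyperplanes) kills all such relations at once, since there are only countably many pairs $(X,X')$. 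I would borrow the exact device from Lemma~\ref{lem:injectivity}, namely set $f(x) = x + \varepsilon\cdot(f_0(x),\dots,f_0(x))$ type perturbations, to keep the presentation uniform with the rest of the paper.

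Next I would apply Lemma~\ref{lem:empty_intersection}, but to the ``bounded-multiplicity sum images'' rather than to the $\chi_i$ themselves. Concretely, replace each $\chi_i$ by the countable set $\Sigma_i := \{\sum_{x\in X_i} f_i(x) \mid X_i \in \mathcal{M}_{\chi_i},\ |X_i| \le B\}$, which is countable because $\chi_i$ is countable and there are only countably many bounded multisets. Lemma~\ref{lem:empty_intersection} (with the $\chi_i$ there taken to be these $\Sigma_i$, and the functions there the identity) yields shifts $\varepsilon_i \in \mathbb{R}^m$ with $\bigcap_{i\in I}(\Sigma_i + \varepsilon_i) = \emptyset$; actually I need a slightly stronger conclusion — that for any choice of one element from each $\Sigma_i + \varepsilon_i$, the partial sums over subsets of $I$ stay distinguishable — so I would instead invoke the countability argument directly: the set of ``bad'' shift-tuples $(\varepsilon_i)_{i\in I}$ for which some collision $\sum_i (s_i + \varepsilon_i) = \sum_i (s_i' + \varepsilon_i)$ with $(s_i)\ne(s_i')$ occurs lies in a countable union of hyperplanes in $(\mathbb{R}^m)^{|I|}$, hence is avoidable. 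Then redefine $f_i \leftarrow f_i + (\varepsilon_i/|X_i|)$... — wait, the shift must be per-element, not per-multiset, so the cleaner route is: absorb the shift into $f_i$ only when $|I|$-many multisets all have the \emph{same} known size, which is not guaranteed; therefore I would instead make $f_i$ take values with $i$-dependent ``tags'' in extra coordinates (e.g. the $i$-th standard basis direction scaled by a generic irrational), so that from $g(X_1,\dots,X_{|I|})$ one can read off each $\sum_{x\in X_i} f_i(x)$ separately, reducing to property (a).

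The main obstacle I anticipate is the interaction between \emph{unknown, varying} cardinalities $|X_i|$ and the additive shifts: a naive ``shift $f_i$ by a constant'' changes the sum by $|X_i|$ times that constant, so disjointness of single-element images does not immediately give disjointness of bounded-multiset sums across indices. The fix is the tagging idea in the previous paragraph — use reserved coordinates to make the decomposition $g \mapsto (\sum_{x\in X_i} f_i(x))_{i\in I}$ explicitly invertible — after which uniqueness of $g$ follows coordinate-by-coordinate from the single-space injectivity of Step~1, and Lemma~\ref{lem:empty_intersection} is used (if at all) only to separate the tag blocks. I would present Step~1 in full detail and the tagging/decomposition as the conceptual heart, citing Lemma~\ref{lem:empty_intersection} for the auxiliary disjointness and keeping the genericity counting arguments brief since they mirror Lemma~\ref{lem:injectivity}.
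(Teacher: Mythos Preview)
Your approach diverges from the paper's, and the divergence occurs exactly where you run into the cardinality obstacle. You first establish single-space injectivity for each $\chi_i$ separately and then try to glue the pieces; that forces you to separate the \emph{sum images} $\Sigma_i$, where the unknown multiplicities $|X_i|$ create the problem you correctly identify. The paper reverses the order of operations: it applies Lemma~\ref{lem:empty_intersection} at the \emph{element} level to obtain maps $\tilde f_i:\chi_i\to\mathbb R^m$ with pairwise disjoint images (taken injective, which countability permits), merges the images into a single countable set $\chi=\bigcup_i \tilde f_i(\chi_i)$, and only then invokes the single-space result \cite[Lemma~5]{gin} once on $\chi$ to get $f:\chi\to\mathbb R^m$ whose multiset-sum is injective on bounded multisets. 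Setting $f_i=f\circ\tilde f_i$ finishes the argument. Because the $\tilde f_i$ have disjoint images, a tuple $(X_1,\dots,X_{|I|})$ of bounded multisets corresponds bijectively to the single bounded multiset $\bigsqcup_i \tilde f_i(X_i)\in\mathcal M_\chi$, so the varying-cardinality issue you wrestle with never arises.

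Your tagging fallback can in principle be made to work, but it is heavier than needed and, as written, is incomplete: reserving coordinate blocks for each $i$ presupposes $m\ge|I|$, which the statement does not grant, and the ``$i$-th basis direction scaled by a generic irrational'' alternative would itself require a countability argument at the level of multiset sums that you have not spelled out. The paper's device---disjoint \emph{elements} first, then one merged countable space, then a single appeal to the one-space lemma---makes all of that unnecessary and uses Lemma~\ref{lem:empty_intersection} only in its most naive form.
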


\begin{proof}
    We first prove the case when $|I|=2$. Since $\chi_{1}$ and $\chi_{2}$ are countable, there exists $\tilde{f}_{1}: \chi_{1} \rightarrow \mathbb{R}^{m}$ and $\tilde{f}_{2}: \chi_{2} \rightarrow \mathbb{R}^{m}$ so that $\{\tilde{f}_{1}(x_{1})\text{ }|\text{ }x_{1} \in \chi_{1}\} \cap \{\tilde{f}_{2}(x_{2})\text{ }|\text{ }x_{2} \in \chi_{2}\} = \emptyset$ by Lemma \ref{lem:empty_intersection}. This implies that for each $(X_{1}, X_{2}) \in \mathcal{M}_{\chi_{1}} \times \mathcal{M}_{\chi_{2}}$, we have
    \begin{equation} \label{eq:empty}
    \{\{\tilde{f}_{1}(x_{1}) \text{ }|\text{ } x_{1} \in X_{1}\}\} \cap \{\{\tilde{f}_{2}(x_{2}) \text{ }|\text{ } x_{2} \in X_{2}\}\} = \emptyset.
    \end{equation}
    Let $\chi:=\tilde{f}_{1}(\chi_{1}) \cup \tilde{f}_{2}(\chi_{2})$. Since both $\chi_{1}$ and $\chi_{2}$ are countable, $\chi$ is also countable. Hence \cite[Lemma 5]{gin} implies that there exists a function $f:\chi \rightarrow \mathbb{R}^{n}$ so that $g(X)=\sum_{x \in X} f(x)$ is unique for each $X \in \mathcal{M}_{\chi}$ of bounded size. Now, we put $f_{1} = f \circ \tilde{f}_{1}$ and $f_{2} = f \circ \tilde{f}_{2}$. By Equation (\ref{eq:empty}), every $X \in \mathcal{M}_{\chi}$ of bounded size can be uniquely decomposed by $\{\{\tilde{f}_{1}(x_{1}) \text{ }|\text{ } x_{1} \in X_{1}\}\} \sqcup \{\{\tilde{f}_{2}(x_{2}) \text{ }|\text{ } x_{2} \in X_{2}\}\}$ for some $(X_{1}, X_{2}) \in \mathcal{M}_{\chi_{1}} \times \mathcal{M}_{\chi_{2}}$ of bounded sizes. Hence $g(X_{1}, X_{2})=\sum_{i=1}^{2} \sum_{x \in X_{i}} f_{i}(x)$ is unique for each $(X_{1}, X_{2}) \in \mathcal{M}_{\chi_{1}} \times \mathcal{M}_{\chi_{2}}$ of bounded sizes as desired. 
    
    As in the proof of Lemma \ref{lem:empty_intersection}, the same argument used in $|I|=2$ can be extended to any finite number of countable spaces without difficulty by using the fact that countable union of countable sets is countable. 
\end{proof}

From another point of view, we can reformulate the problem of finding the functions $f_i$ in Proposition \ref{prop:union} into a simpler one of finding $|I|-1$ scalar values. Since this can be easily derived from the proof of Proposition \ref{prop:union}, we will omit the proof.

\begin{corollary} \label{cor:union}
Let $m \in \mathbb{R}$ and let $I$ be an index set satisfying $|I| < \infty$. Given countable spaces $\{\chi_{i}\}_{i \in I}$ and a family of functions $\{f_{i}:\chi_{i} \rightarrow \mathbb{R}^{m}\}_{i \in I}$, there exists infinitely many values $\varepsilon_2, \dots, \varepsilon_{|I|} \in \mathbb{R}$ such that 
\begin{equation*}
g(X_{1}, \dots, X_{|I|})=\sum_{x \in X_{1}}f_1(x) + \sum_{i=2}^{|I|} (1+\varepsilon_{i}) \cdot \sum_{x \in X_{i}}f_i(x)
\end{equation*}
is unique for each $(X_{1},\dots,X_{|I|}) \in \mathcal{M}_{\chi_{1}} \times \dots \times \mathcal{M}_{\chi_{|I|}}$ of bounded sizes. 
\end{corollary}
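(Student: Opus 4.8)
The plan is to reuse the proof of Proposition~\ref{prop:union} almost verbatim, changing one thing: the \emph{translations} that Lemma~\ref{lem:empty_intersection} produces are replaced by \emph{dilations} by the scalars $1+\varepsilon_i$. First I would recall from the proof of Proposition~\ref{prop:union} that each $f_i$ there arises as a post-composition of the function from \cite[Lemma 5]{gin}, so that for a fixed size bound $B$ the multiset-sum map $\sigma_i(X_i) := \sum_{x \in X_i} f_i(x)$ is injective on $\{X_i \in \mathcal{M}_{\chi_i} : |X_i| \le B\}$; write $A_i := \sigma_i(\{X_i : |X_i| \le B\})$, a countable subset of $\mathbb{R}^m$. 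Then $g(X_1,\dots,X_{|I|}) = \sigma_1(X_1) + \sum_{i \ge 2}(1+\varepsilon_i)\,\sigma_i(X_i)$, so it is enough to pick $(\varepsilon_2,\dots,\varepsilon_{|I|})$ making the linear map $(s_1,\dots,s_{|I|}) \mapsto s_1 + \sum_{i\ge 2}(1+\varepsilon_i)\,s_i$ injective on $A_1 \times \cdots \times A_{|I|}$; injectivity of each $\sigma_i$ then lifts this to injectivity of $g$ itself.

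Next I would translate that into a separation statement: the linear map above fails to be injective on $A_1 \times \cdots \times A_{|I|}$ precisely when there is a nonzero tuple $(\delta_1,\dots,\delta_{|I|})$ with $\delta_i \in A_i - A_i$ and $\delta_1 + \sum_{i \ge 2}(1+\varepsilon_i)\,\delta_i = 0$. For a fixed such tuple I would split into two cases. If $\delta_i = 0$ for all $i \ge 2$, the equation becomes $\delta_1 = 0$, contradicting nontriviality, so no choice of scalars is bad for this tuple. If instead $\delta_{i_0} \ne 0$ for some $i_0 \ge 2$, then the bad $(\varepsilon_2,\dots,\varepsilon_{|I|}) \in \mathbb{R}^{|I|-1}$ form the preimage of the single point $-\delta_1 - \sum_{i\ge 2}\delta_i$ under the linear map $(\varepsilon_2,\dots,\varepsilon_{|I|}) \mapsto \sum_{i\ge2}\varepsilon_i\,\delta_i$, whose $\varepsilon_{i_0}$-coefficient $\delta_{i_0}$ is nonzero; hence this bad set is empty or an affine subspace of dimension at most $|I|-2$, in particular Lebesgue-null in $\mathbb{R}^{|I|-1}$.

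To finish, I would note that each $A_i$ is countable, hence so is each $A_i - A_i$ and so is the whole family of difference tuples; the union of the corresponding bad sets is then a countable union of null sets, hence null, so its complement is co-null and in particular infinite, and any $(\varepsilon_2,\dots,\varepsilon_{|I|})$ in it works for all multiset tuples of size at most $B$. As in Lemma~\ref{lem:empty_intersection} and Proposition~\ref{prop:union}, passing from $|I| = 2$ to general finite $|I|$ adds only bookkeeping, since a countable union of countable sets is countable.

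The hard part, such as it is, lies in the degenerate difference tuples — those with $\delta_i = 0$ for every $i \ge 2$ — which must be isolated and disposed of by hand before the null-set argument can be run, and in being careful that the given $f_i$ really are the size-$B$-injective maps built inside the proof of Proposition~\ref{prop:union}, so that separating the sums $\sigma_i(X_i)$ separates the multisets $X_i$ themselves. Once those two points are settled, the remainder is the routine ``avoid a countable union of Lebesgue-null sets'' argument and involves no real computation.
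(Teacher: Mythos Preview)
Your argument is correct and is precisely the kind of elaboration the paper has in mind: the paper omits the proof entirely, saying only that it ``can be easily derived from the proof of Proposition~\ref{prop:union}.'' Your replacement of the additive translations in Lemma~\ref{lem:empty_intersection} by multiplicative dilations $(1+\varepsilon_i)$, followed by the ``avoid a countable union of Lebesgue-null affine subspaces'' step, is the natural way to carry that derivation out, and the measure-theoretic phrasing is genuinely needed once $|I|>2$ since the bad set attached to a single difference tuple is then a positive-dimensional affine subspace rather than a countable set.

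One point deserves emphasis, and you already flag it: the corollary as literally stated (``given \emph{any} family $\{f_i\}$'') is false --- take $f_1$ constant and two distinct multisets $X_1,X_1'$ of the same size. Your reading, that the $f_i$ are the multiset-injective maps supplied by \cite[Lemma~5]{gin} for each $\chi_i$ separately (so that each $\sigma_i$ is injective on $\{X_i:|X_i|\le B\}$), is the only one under which the statement holds, and it is consistent with how the paper actually invokes the corollary in Remark~\ref{rmk:integration_expressivity}. With that reading fixed, your reduction to injectivity of $(s_1,\dots,s_{|I|})\mapsto s_1+\sum_{i\ge 2}(1+\varepsilon_i)s_i$ on $A_1\times\cdots\times A_{|I|}$ and the subsequent null-set argument are complete.
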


\subsection{Model Frameworks: {\rm $\mathcal{C}$-LGVR} and {\rm $\mathcal{C}$-LVGR}$^{+}$} \label{subsection:model_architectures}

\begin{figure}[t]
    \centering
    \includegraphics[width=0.95\textwidth]{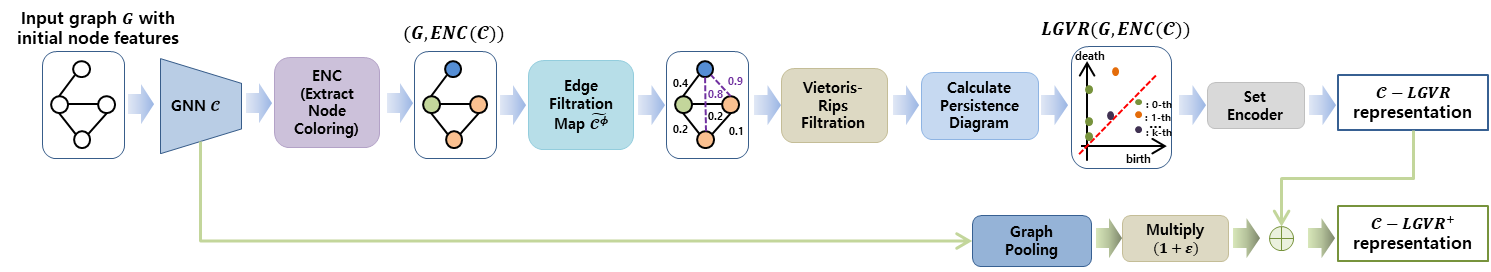}
    \caption{Two model frameworks: $\mathcal{C}$-${\rm LGVR}$ and $\mathcal{C}$-${\rm LGVR}^{+}$. Note that ENC component refers to the process of extracting node coloring from the message passing graph neural network (or coloring) $\mathcal{C}$ as described in Section~\ref{subsection:model_architectures} and \ref{subsection:model_expressivity}.}
    \label{fig:model_fig}
\end{figure}

We propose two model frameworks, {\rm $\mathcal{C}$-LGVR} and {\rm $\mathcal{C}$-LGVR}$^{+}$, depending on the application of the integration technique in Section~\ref{subsection:integration}. For both model frameworks, $\mathcal{C}$ indicates a message passing graph neural network.

\subsubsection{$\mathcal{C}$-LGVR} \label{subsubsection:basic_lgvr}

Note that the output of $\mathcal{C}$ for a graph $G$ may or may not be node coloring, depending on $\mathcal{C}$. For example, if $\mathcal{C}$ is GIN (\cite{gin}), then its output is the node coloring. However, when $\mathcal{C}$ is PPGN (\cite{ppgn}), it provides all node tuple colorings as its output, which is not the node coloring exactly. However, since LGVR relies on the node coloring information of $\mathcal{C}$ as input, we first extract the node colorings from $\mathcal{C}$ in order to bridge this gap. We denote the process of extracting node coloring from $\mathcal{C}$ as ${\rm ENC}(\mathcal{C})$, and we refer to Section~\ref{subsection:model_expressivity} for specific examples. 

After taking node coloring of $G$ from ${\rm ENC}(\mathcal{C})$, we extract the LGVR persistence diagram ${\rm LGVR}(G, \text{ENC}(\mathcal{C}))$ (Algorithm~\ref{algo:lgvr_algo}). Finally, ${\rm LGVR}(G, \text{ENC}(\mathcal{C}))$ is encoded via a set encoder to extract a graph representation (Figure \ref{fig:model_fig}).

\subsubsection{{$\mathcal{C}$-LVGR}$^{+}$}

{\rm $\mathcal{C}$-LVGR}$^{+}$ differs from {\rm $\mathcal{C}$-LVGR} in the way it extracts the graph representation: {\rm $\mathcal{C}$-LVGR}$^{+}$ uses not only the $\mathcal{C}$-LGVR representation but also the pooling output of $\mathcal{C}$ (Figure \ref{fig:model_fig}). Specifically, {\rm $\mathcal{C}$-LGVR}$^{+}$ uses 
\begin{equation} \label{eq:plus}
    (\mathcal{C}\text{-LGVR representation}) + (1+\varepsilon) \cdot (\text{Graph Pooling of }\mathcal{C})
\end{equation} 
as the graph representation vector, where $\varepsilon$ is a learnable parameter in Corollary \ref{cor:union}, to leverage the expressive powers of both representations (See Remark~\ref{rmk:integration_expressivity}). 


\begin{remark} \label{rmk:integration_expressivity}
We will briefly explain how Corollary~\ref{cor:union} is related to the fact that {\rm $\mathcal{C}$-LVGR}$^{+}$ preserves the expressive powers of both coloring information of $\mathcal{C}$ and topological information of LGVR. Note that each component of the output of {\rm $\mathcal{C}$-LVGR}$^{+}$ (Equation~\ref{eq:plus}) can be written simply as follows: given a graph $G$, 
\begin{enumerate}
    \item $(\text{{\rm $\mathcal{C}$-LVGR}$^{+}$ representation}) = MLP(\sum_{x_0 \in {\rm ph}^{0}(G)}S_0(x_0), \sum_{x_1 \in {\rm ph}^{1}(G)}S_1(x_1))$, where ${\rm ph}^{i}(G)$ is the $i$-th persistence diagram of the Vietoris-Rips filtration of $G$ (which is a multi-set) and $S_i$ is a universal set encoder for each $i=0,1$, and
    \item $(\text{Graph Pooling of $\mathcal{C}$})=\sum_{u \in V(G)} \mathcal{C}(u)$.
\end{enumerate}
Since ${\rm ph}^{0}(G)$, ${\rm ph}^{1}(G)$, and $\{\{ \mathcal{C}(u) \text{ }|\text{ } u \in V(G)\}\}$ are all multi-sets of bounded sizes, the universalities of MLP (\cite{universal1, universal2}) and $S_i$ (Appendix~\ref{subsection:set_encoder}) guarantee the existence of $f_i$ in Corollary~\ref{cor:union}. Hence this implies that {\rm $\mathcal{C}$-LVGR}$^{+}$ extracts different representation vectors for each triple of multi-sets $({\rm ph}^{0}(G), {\rm ph}^{1}(G), \{\{ \mathcal{C}(u) \text{ }|\text{ } u \in V(G)\}\})$ by Corollary~\ref{cor:union}. In other words, {\rm $\mathcal{C}$-LVGR}$^{+}$ integrates the coloring information of $\mathcal{C}$, which corresponds to $\{\{ \mathcal{C}(u) \text{ }|\text{ } u \in V(G)\}\}$, and the topological information of LGVR, which corresponds to $({\rm ph}^{0}(G), {\rm ph}^{1}(G))$.
\end{remark}

\subsubsection{Training Loss of $\mathcal{C}$-LGVR and {$\mathcal{C}$-LVGR}$^{+}$}

For both {\rm $\mathcal{C}$-LGVR} and {\rm $\mathcal{C}$-LGVR}$^{+}$, we train them in an end-to-end fashion by minimizing the task-specific loss $\mathcal{L}_{\rm task}$ and the {\rm LGVR} loss $\mathcal{L}_{\rm LGVR}$ in Algorithm~\ref{algo:lgvr_algo} simultaneously, that is, for a set of all learnable parameters $\theta$ and a hyperparameter $\lambda$, 
\begin{equation*}
\theta^{*} = {\rm argmin}_{\theta} \sum_{G \in \mathcal{G}, y_{G} \in \mathcal{Y}_{\mathcal{G}}} \{\mathcal{L}_{\rm task}(G, y_{G}; \theta) + \lambda \cdot \mathcal{L}_{\rm LGVR}(G; \theta)\},
\end{equation*}
where $\mathcal{Y}_{\mathcal{G}}$ is the set of ground truth of $\mathcal{G}$ and $y_{G} \in \mathcal{Y}_{\mathcal{G}}$ is the ground truth of a graph $G \in \mathcal{G}$.

\subsection{Theoretical Expressivities of {\rm GIN-LGVR}, {\rm GIN-LVGR}$^{+}$, and {\rm PPGN-LVGR}$^{+}$} \label{subsection:model_expressivity}

We analyze the expressive powers of three specific models (which will be used in Section~\ref{section:experiments}), {\rm GIN-LGVR}, {\rm GIN-LVGR}$^{+}$, and {\rm PPGN-LVGR}$^{+}$, with the model frameworks in Section \ref{subsection:model_architectures}. As described in Section~\ref{subsubsection:basic_lgvr}, we first introduce the simplest form of ENC($\cdot$) to be used in GIN (\cite{gin}) and PPGN (\cite{ppgn}), respectively (Figure~\ref{fig:model_fig}). 

\textbf{ENC for GIN.} Since GIN provides node colorings as output, ENC is set to the identity: 
\begin{equation*}
    \text{ENC}(Y):=Y \in \mathbb{R}^{n \times d},    
\end{equation*}
where $Y \in \mathbb{R}^{n \times d}$ is the GIN output, $n$ is the number of graph nodes and $d$ is the dimension of node colorings.

\textbf{ENC for PPGN.} Since PPGN provides all node tuple colorings as output, ENC is set up to extract diagonal elements of the node tuple matrix of PPGN: 
\begin{equation*}
    \text{ENC}(Y):=\text{diag}(Y) \in \mathbb{R}^{n \times d},    
\end{equation*}
where $Y \in \mathbb{R}^{n \times n \times d}$ is the PPGN output, $n$ is the number of graph nodes and $d$ is the dimension of node tuple colorings. 

Finally, we will conclude this section with a theoretical analysis of our models as follows.

\begin{corollary} \label{cor:model}
    Assume that $\chi$ is countable. Then the following statements hold:
    \begin{enumerate}
        \item Assume that either all initial node colorings are the same or $\mathcal{G}$ contains no graphs with isolated nodes. Then {\rm GIN-LGVR} is strictly more powerful than WL test. 
        \item {\rm GIN-LVGR}$^{+}$ is strictly more powerful than WL test.
        \item {\rm PPGN-LVGR}$^{+}$ is at least as powerful as $3$-WL test.
    \end{enumerate}
\end{corollary}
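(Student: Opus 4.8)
The plan is to reduce all three claims to Theorem~\ref{thm:main}, the known expressive powers of GIN (\cite{gin}) and PPGN (\cite{ppgn}), the universality of the set encoder $S$ (Appendix), and the integration principle of Corollary~\ref{cor:union} together with Remark~\ref{rmk:integration_expressivity}. Recall that a model $A$ is \emph{at least as powerful} as a refinement test $B$ iff there is a choice of learnable parameters for which $B$-distinguishability implies $A$-distinguishability, and \emph{strictly more powerful} iff, in addition, some fixed pair is $A$-distinguishable but not $B$-distinguishable. For the strictness claims the witness pair will always be the graphs of Figure~\ref{fig:thm_fig} supplied by Theorem~\ref{thm:main}(2) (equivalently Theorem~\ref{thm:wl_vr_comparison}): they have equal order and identical stable WL colorings, hence are WL-indistinguishable, yet their LGVR diagrams already differ in the cardinality of the $0$-th diagram, and they contain no isolated nodes.

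For (1): I would first invoke \cite{gin} to fix the GIN weights so that GIN realizes the stable WL coloring $\mathcal{C}$ as its node output; since $\mathrm{ENC}$ for GIN is the identity, $\mathrm{GIN}\text{-}\mathrm{LGVR}(G)$ is then $S$ applied to $\mathrm{LGVR}(G,\mathcal{C})$. If all initial node colorings coincide, Theorem~\ref{thm:main}(2) gives $\mathrm{LGVR}(G,\mathcal{C}) \neq \mathrm{LGVR}(H,\mathcal{C})$ whenever WL distinguishes $G$ and $H$; if instead $\mathcal{G}$ has no graph with isolated nodes, then because WL coloring satisfies the degree assumption and ``WL distinguishes $G$ and $H$'' is exactly $\{\{\mathcal{C}(v) \mid v \in V(G)\}\} \neq \{\{\mathcal{C}(u) \mid u \in V(H)\}\}$, Theorem~\ref{thm:main}(1) applies. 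Either way, composing with an injective realization of $S$ shows $\mathrm{GIN}\text{-}\mathrm{LGVR}$ is at least as powerful as WL, and the Figure~\ref{fig:thm_fig} pair gives strictness (it is WL-indistinguishable but LGVR-distinguishable, and, being isolated-node-free, is admissible under the second hypothesis as well).

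For (2) and (3): the key observation is that $\mathcal{C}\text{-}\mathrm{LVGR}^{+}$ carries the graph-level readout of $\mathcal{C}$ as one summand of its output, while the other summand encodes $(\mathrm{ph}^{0}(G), \mathrm{ph}^{1}(G))$; by Remark~\ref{rmk:integration_expressivity} and Corollary~\ref{cor:union} (applied with $|I|=3$, the three bounded-size multisets $\mathrm{ph}^{0}(G)$, $\mathrm{ph}^{1}(G)$, $\{\{\mathcal{C}(u) \mid u \in V(G)\}\}$, and the MLP and $S_i$ realized universally), there is a parameter choice for which the output is injective in each component. Hence $\mathcal{C}\text{-}\mathrm{LVGR}^{+}$ distinguishes $G,H$ whenever the readout of $\mathcal{C}$ does, so it is at least as powerful as $\mathcal{C}$. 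Specializing $\mathcal{C}=\mathrm{GIN}$, which is as powerful as WL (\cite{gin}), yields that $\mathrm{GIN}\text{-}\mathrm{LVGR}^{+}$ is at least as powerful as WL; for strictness, on the Figure~\ref{fig:thm_fig} pair the coloring-readout summands agree while the LGVR summands differ, so by the same component-wise injectivity the integrated representations differ. Specializing $\mathcal{C}=\mathrm{PPGN}$, whose graph-level output is as powerful as 3-WL (\cite{ppgn}) — and noting that the readout branch uses the full PPGN output rather than the $\mathrm{diag}$ restriction fed to LGVR through $\mathrm{ENC}$ — yields that $\mathrm{PPGN}\text{-}\mathrm{LVGR}^{+}$ is at least as powerful as 3-WL.

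The main obstacle I expect is not any single deep step but the careful bookkeeping: one must confirm that the separating graphs of Theorem~\ref{thm:main}(2) actually belong to $\mathcal{G}$ and have no isolated nodes (so that they serve as a witness under the ``no isolated nodes'' branch of (1) as well), and that each downstream map — the set encoder $S$, the task MLP, and the $\mathcal{C}$-readout — can be taken injective/universal on multisets of bounded size, so that distinctions established at the level of persistence diagrams or color histograms survive to the final representation. A secondary subtlety is that, for PPGN, $\mathrm{ENC}=\mathrm{diag}$ may weaken the LGVR branch, so the 3-WL lower bound in (3) must be routed entirely through the PPGN readout branch, with the LGVR branch only able to add — never remove — distinguishing power thanks to the injectivity furnished by Corollary~\ref{cor:union}.
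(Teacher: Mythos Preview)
Your proposal is correct and follows essentially the same route as the paper: part~(1) is reduced to Theorem~\ref{thm:main} together with \cite[Theorem~3]{gin}, part~(3) to Corollary~\ref{cor:union} and Remark~\ref{rmk:integration_expressivity}, and part~(2) combines the latter two for the ``at least as powerful'' direction with the Figure~\ref{fig:thm_fig} pair for strictness. Your write-up is in fact more careful than the paper's terse proof in spelling out the two branches of the hypothesis in~(1), the injectivity required of the set encoder, and the subtlety that the $3$-WL lower bound in~(3) must be routed through the full PPGN readout branch rather than the $\mathrm{diag}$-restricted LGVR branch.
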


\begin{proof}
First $(1)$ is a direct consequence of Theorem \ref{thm:main} and \cite[Theorem 3]{gin}, and $(3)$ follows from Corollary \ref{cor:union} and Remark~\ref{rmk:integration_expressivity}. Hence it remains to show $(2)$. First, Corollary \ref{cor:union} and Remark~\ref{rmk:integration_expressivity} again imply that {\rm GIN-LVGR}$^{+}$ is at least as powerful as the WL test. To prove the 'strictly powerful' part, we need to show that there exists a pair of graphs $G$ and $H$ that {\rm GIN-LVGR}$^{+}$ can distinguish but the WL test cannot. Again, it is easy to see that $G$ and $H$ in Figure~\ref{fig:thm_fig} work as a desired example, which concludes the proof. 
\end{proof}

\section{Experiments} \label{section:experiments}

In this section, we evaluate the performance of our models, {\rm GIN-LGVR}, {\rm GIN-LVGR}$^{+}$, and {\rm PPGN-LVGR}$^{+}$, on several graph classification and regression benchmark datasets. We measure the performance improvements of our models compared to the baseline message passing GNNs $\mathcal{C}$, {\rm GIN} and {\rm PPGN}, to demonstrate that our edge filtration-based approach helps $\mathcal{C}$ to achieve a substantial gain in predictive performance. Furthermore, we compare our models with node filtration-based ones (\cite{gfl}) to validate the superiority of edge filtration over node filtration. In short, we focus on experimental verification of the following claims:
\begin{enumerate}[font={\bfseries},label={Claim \arabic*.}, leftmargin=0.8in]
    \item Superior performances of our model frameworks, {\rm $\mathcal{C}$-LGVR} and {\rm $\mathcal{C}$-LVGR}$^{+}$, that outperform the message passing GNN $\mathcal{C}$.
    \item Superiority of edge filtration-based approach over node filtration-based one.
    \item Experimental validation of our integration technique (Section~\ref{subsection:integration}; Corollary~\ref{cor:union}) that integrates the pooling information of $\mathcal{C}$ and the LGVR information.
\end{enumerate}

\subsection{Experimental Setup}

We conduct experiments on three models: {\rm GIN-LGVR}, {\rm GIN-LVGR}$^{+}$, and {\rm PPGN-LVGR}$^{+}$. Both networks, GIN (\cite{gin}) and PPGN (\cite{ppgn}), are constructed in their most basic form: they both consist of three message passing layers, with a hidden dimension of $64$ for GIN and $400$ for PPGN. To minimize the impacts of other techniques, we refrain from using the jumping knowledge network scheme (\cite{jkn}). Finally, to encode LGVR diagrams, which are a tuple of multi-sets, we use a set encoder. To sufficiently leverage their topological information, we set our set encoder by combining Deep Set (\cite{deepset}) and Set Transformer (\cite{settransformer}) (See Appendix~\ref{subsection:set_encoder} for details). We run all experiments on a single DGX-A100 GPU. Our code is publicly available at \url{https://github.com/samsungsds-research-papers/LGVR}.

\subsection{Datasets}

We evaluate our methods on two different tasks: graph classification and graph regression. For classification, we test our method on 7 benchmark graph datasets: 5 bioinformatics datasets (MUTAG, PTC, PROTEINS, NCI1, NCI109) that represent chemical compounds or protein substructures, and two social network datasets (IMDB-B, IMDB-M) (\cite{benchmark}). For the regression task, we experiment on a standard graph benchmark QM9 dataset (\cite{qm9-1, qm9-4, qm9-2}). It is made up of 134k small organic molecules of varying sizes from 4 to 29 atoms, and the task is to predict 12 real-valued physical quantities for each molecule graph. Further details can be found in Appendix~\ref{app:dataset}.

\subsection{Baseline and Comparison Models} 

To argue the superiority of our method, we essentially compare our models {\rm $\mathcal{C}$-LGVR} and {\rm $\mathcal{C}$-LVGR}$^{+}$ to the same message passing graph neural network $\mathcal{C}$: we compare {\rm GIN-LGVR} and {\rm GIN-LVGR}$^{+}$ to GIN, and {\rm PPGN-LVGR}$^{+}$ to PPGN. To demonstrate the superiority of edge filtration over node filtration, we further compare the performance of both methods for GIN. For fairness in comparison, both models are constructed with the same GIN architecture and extract topological information in a single scale. 

Specifically, we conduct experiments using (1) GFL (\cite{gfl}), which is a single-scale version of node filtration, and (2) GFL$^{+}$ (which we call) that integrates graph pooling and GFL by using the integration technique in Section~\ref{subsection:integration}. All hyperparameters were carried out on the same set based on $\mathcal{C}$ (Appendix~\ref{subsection:hyperparameter}):  the learning rate is set to $\{5 \cdot 10^{-3}, 10^{-3}, 5 \cdot 10^{-4}, 10^{-4}, 5 \cdot 10^{-5}\}$ for GIN while it is set to $\{10^{-4}, 5 \cdot 10^{-5}\}$ for PPGN. The decay rate is set to $\{0.5, 0.75, 1.0\}$ with Adam optimizer (\cite{adam}), and we implement all the models by tuning hyperparameters based on the validation score. 

\subsection{Graph Classification Results} \label{subsection:classification}

We test our models on datasets from the domains of bioinformatics and social networks. Since there is no separate test set for these datasets, for a fair comparison, we follow the standard 10-fold cross-validation based on the same split used by \cite{dgcnn} and report the results according to the evaluation protocol described by \cite{ppgn}: 
\begin{equation} \label{eq:final_perf}
    \max_{i \in \{1,\dots, t\}} \frac{1}{10} \cdot \sum_{k=1}^{10} P_{k,i},
\end{equation}
where $t$ is the total epoch and $P_{k,i}$ is the $k$-fold validation accuracy at $i$-th epoch. 

\subsubsection{Performance Analysis} \label{subsubsection:perf_analysis_class}

Table \ref{tab:classification} presents the performances of our models ({\rm GIN-LGVR}, {\rm GIN-LVGR}$^{+}$, {\rm PPGN-LVGR}$^{+}$) and comparison models ({\rm GIN}, {\rm GFL}, {\rm GFL}$^{+}$, {\rm PPGN}). First of all, we experimentally verify \textbf{Claim 1, 2, 3}. As shown in Table~\ref{tab:classification}, our models show the best performances across all datasets for the message passing GNN $\mathcal{C}$. This empirically demonstrates that our approach helps the GNN $\mathcal{C}$ to achieve substantial gains in predictive performance, which validates \textbf{Claim 1}. Moreover, we found that our approach based on edge filtration shows superior performance compared to node filtration-based methods ({\rm GFL}, {\rm GFL}$^{+}$) by utilizing both nodes and edges to extract more informative representations. This demonstrates the superiority of our edge filtration over node filtration (Figure~\ref{fig:perf_improv}), which validates \textbf{Claim 2}. Finally, we remark that {\rm GIN-LVGR}$^{+}$ generally shows better performances than {\rm GIN-LGVR} and outperforms {\rm GIN} on all datasets: {\rm GIN-LVGR}$^{+}$ achieves the best performance on 5 out of 7 datasets. This empirically demonstrates that our theoretical framework (Corollary~\ref{cor:union}) which integrates the pooling output of $\mathcal{C}$ and the LGVR output without losing information, works well in practice, which validates \textbf{Claim 3}. 

In addition to \textbf{Claim 1, 2, 3}, Table~\ref{tab:classification} provides an interesting finding: the improvement percentages of our models compared to ${\rm GIN}$ are inversely proportional to the amount of initial node information. Specifically, as shown in Figure~\ref{fig:perf_improv}, our models show relatively larger improvement percentages on datasets such as MUTAG, PROTEINS, and IMDB with less initial node information (see the feature column in Table~\ref{tab:dataset_classification}), compared to PTC, NCI1, and NCI109. This is because having an initial node with rich information enables GNNs to extract a sufficiently expressive graph representation using only the node information so that the performance improvement of additional topological information from LGVR decreases. Therefore, this implies that our LGVR is more effective for datasets with less initial node information, which are relatively difficult to analyze.

\begin{table*}[t] 
	\centering
	\caption{Graph classification results (with mean accuracy)}
	\label{tab:classification}
    \scalebox{0.78}{
    \begin{tabular}{|c|c| c c c c c c c |}
		\hline
  		& Model / Dataset & MUTAG  & PTC   & PROTEINS  & NCI1  & NCI109  & IMDB-B & IMDB-M    \\ \hline
  		
        \multirow{5}{*}{$\mathcal{C}$={\rm GIN}} & GIN & 78.89     & 59.71     & 68.11      & 70.19 & 69.34 &  73.8 & 43.8       \\ 
        & {GFL} &  86.11    &  60.0    &   73.06    &   71.14    &  70.53  &   68.7  & 44.67  \\    
        & {GFL}$^{+}$ &  79.44    &   59.11   & 69.55    &  71.16  &  70.12  & 71.9   &  44.33 \\    
        & \textbf{{GIN-LGVR}} & \bf{86.67}   &  60.29    & \bf{73.42}      &   69.22 & 69.39 &  72.5  &  45.47    \\     
        & \textbf{{GIN-LGVR}$^{+}$} &  85.0    &  \bf{61.76}    &  69.55     &  \bf{71.61} & \bf{70.68} &  \bf{74.0} &  \bf{45.8}  \\ \hline
        
        \multirow{2}{*}{$\mathcal{C}$={\rm PPGN}} & PPGN & 88.88     & 64.7     & 76.39      & 81.21 & 81.77 & 72.2  & 44.73    \\ 
         & \textbf{{PPGN-LGVR}$^{+}$} & \bf{91.11}     & \bf{66.47}     & \bf{76.76}      & \bf{83.04} & \bf{81.88} & \bf{73.5}  & \bf{51.0}        \\ \hline
	\end{tabular}
	}
\end{table*}

\begin{figure}[t]
    \centering
    \includegraphics[height=0.3\textwidth, width=0.7\textwidth]{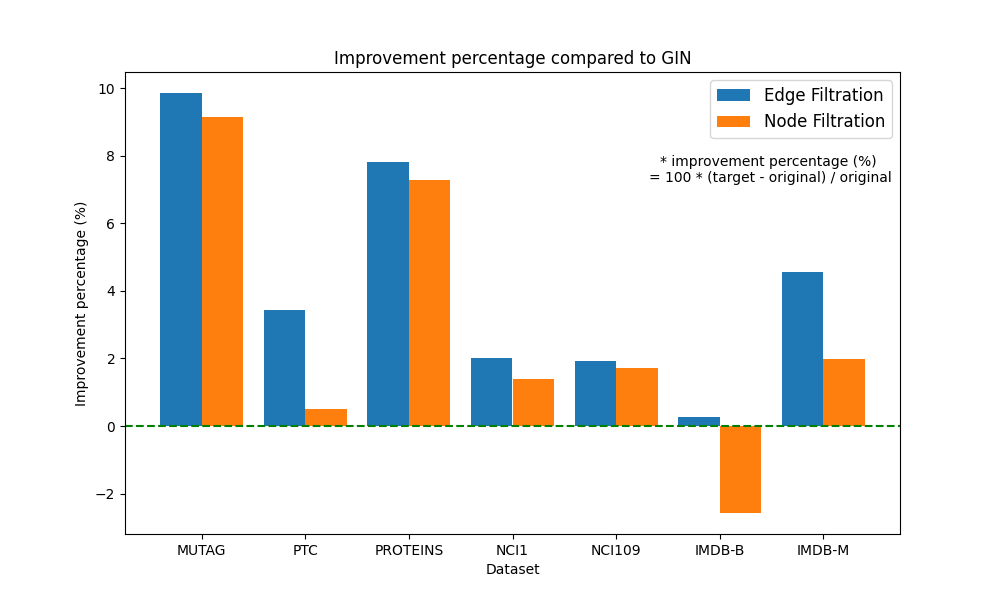}
    \caption{Improvement percentages of two filtrations compared to GIN. The performances of two filtrations are calculated as $\max\{P_{\text{GIN-LGVR}} \text{, } P_{\text{GIN-LVGR}^{+}}\}$ for edge filtration, and $\max\{P_{\text{GFL}} \text{, } P_{\text{GFL}^{+}}\}$ for node filtration, where $P_{\bullet}$ is the model performance of Table~\ref{tab:classification}.}
    \label{fig:perf_improv}
\end{figure}

For a comprehensive analysis, we further analyze the classification results with $\mathcal{C}=\text{GIN}$ from two aspects in the next subsection. Here we will briefly explain them. First, we found that model performances varied significantly by the data split (Table~\ref{tab:classification_stdev}, Figure~\ref{fig:stdev_fig}). Hence we further analyze the standard deviation of the $10$-fold performances in Table~\ref{tab:classification} to determine how stable each model is in training regardless of the data split. We found that our models, {\rm GIN-LGVR} and {\rm GIN-LVGR}$^{+}$, show low standard deviations compared to {\rm GIN}, {\rm GFL}, and {\rm GFL}$^{+}$ (Figure~\ref{fig:stdev_fig}). This implies that {\rm GIN-LVGR} and {\rm GIN-LVGR}$^{+}$, which use both nodes and edges to reflect various characteristics in graph representations, would have enabled more stable learning compared to {\rm GIN}, {\rm GFL}, and {\rm GFL}$^{+}$ that only use nodes. 

Next, we compare the models using performance metric different from Equation~\ref{eq:final_perf}. Metrics are indicators to determine the perspective from which models are compared, hence we measure performances based on another commonly used metric (Equation~\ref{eq:max_perf}) to compare models from a different perspective. Concerning this metric, our models, {\rm GIN-LGVR} and {\rm GIN-LVGR}$^{+}$, still exhibit the best performance compared to others (See Table~\ref{tab:max_perf_classification}).

\subsubsection{Additional Analysis on Graph Classification with $\mathcal{C}=\text{GIN}$} \label{subsection:additional_analysis_experiment}

Here, we will provide additional analysis on the 10-fold classification results of each dataset for five GIN type models in Section~\ref{subsubsection:perf_analysis_class}: {\rm GIN}, {\rm GFL}, ${\rm GFL}^{+}$, {\rm GIN-LVGR}, and {\rm GIN-LVGR}$^{+}$. Before conducting the analysis, we fix some notation. Let $i_{0}$ be the epoch that computes the performance (Equation~\ref{eq:final_perf}), that is, $\text{Equation~(\ref{eq:final_perf})}=\frac{1}{10} \cdot \sum_{k=1}^{10} P_{k,i_0}$. For a notational convenience, we denote $P_{k,i_0}$ by $P_{k}$.

\smallskip
\textbf{Analysis of $P_{k}$ for each model.} In Section~\ref{subsubsection:perf_analysis_class}, we use the average of $\{P_k\}_{k=1,\dots,10}$ as the final performance metric. However, as can be seen in Table~\ref{tab:classification_stdev}, the performance $P_{k}$ varies significantly for each fold $k$. Therefore, we measure the standard deviation of the $10$-fold performances in Table~\ref{tab:classification} to determine how stable each model ({\rm GIN}, {\rm GFL}, ${\rm GFL}^{+}$, {\rm GIN-LVGR}, and {\rm GIN-LVGR}$^{+}$) is in training regardless of the data split.

The standard deviation results for $\{P_k\}_{k=1,\dots,10}$ for each model and dataset are summarized in Figure~\ref{fig:stdev_fig}. Here, we find that {\rm GIN-LVGR} and {\rm GIN-LVGR}$^{+}$ show relatively low standard deviations compared to other models. More specifically, each model shows the following average of the overall standard deviation: $4.42$ (for {\rm GIN-LVGR}), $4.91$ (for {\rm GIN-LVGR}$^{+}$), $5.08$ (for {\rm GIN}), $5.16$ (for {\rm GFL}), and $6.66$ (for {\rm GFL}$^{+}$). Since graphs are composed of nodes and edges, the importance of nodes and edges can vary depending on the data. From this perspective, we claim that these results stem from the characteristics of {\rm GIN-LVGR} and {\rm GIN-LVGR}$^{+}$, which leverage both nodes and edges. In other words, this result implies that {\rm GIN-LVGR} and {\rm GIN-LVGR}$^{+}$, which leverage both nodes and edges to reflect various features in graph representations, would have enabled more stable learning compared to other models ({\rm GIN}, {\rm GFL}, and {\rm GFL}$^{+}$) that rely solely on nodes.

\begin{table*}[t] 
	\centering
	\caption{Minimum and maximum values of $P_k$ for each model and dataset.}
	\label{tab:classification_stdev}
    \scalebox{0.8}{
    \begin{tabular}{|c|c| c c c c c |}
		\hline
  	Dataset / Model	&  & {\rm GIN}  & {\rm GFL}   & {\rm GFL}$^{+}$  & {\rm GIN-LVGR} & {\rm GIN-LVGR}$^{+}$ \\ \hline
        \multirow{2}{*}{MUTAG} & min &   61.11  &  72.22   &  66.67   & 77.78    &   72.22    \\ 
        & max &  94.44   &  100.0   &  100.0   &  100.0   &   100.0    \\  \hline
        
        \multirow{2}{*}{PTC} & min &  50.0   &  47.06   &  44.12   &  44.12   &  55.88    \\ 
        & max &  85.35   &  70.59   &  76.47   &   70.59  &   73.53    \\  \hline

        \multirow{2}{*}{PROTEINS} & min &  59.46   &  63.06   &  58.56   &  63.96   &  58.56     \\ 
        & max &   76.58  &  81.08   &  77.48   &  85.59   &   82.88     \\  \hline

        \multirow{2}{*}{NCI1} & min &  66.91   &  66.67   &  67.15   & 65.94    &   67.15    \\ 
        & max &  74.21   &  74.69   &  74.69   &  71.29   &  74.7     \\  \hline

        \multirow{2}{*}{NCI109} & min &  66.02   &  64.8   &  67.23   &  65.78   &  67.48     \\ 
        & max &  72.09   &  74.51   &  74.02   &  73.54   &  75.97     \\  \hline

        \multirow{2}{*}{IMDB-B} & min &  65.0   &  64.0   &  61.0   &  69.0   &  72.0    \\ 
        & max &  79.0   &  72.0   &   81.0  &  82.0   &   79.0    \\  \hline

        \multirow{2}{*}{IMDB-M} & min &  39.33   &  39.33   &  32.67   &  42.67   &   40.0    \\ 
        & max &  50.0   &  49.33   &   56.67  &  48.0   &   54.0    \\  \hline
	\end{tabular}
	}
\end{table*}

\begin{figure}[t]
    \centering
    \includegraphics[height=0.4\textwidth]{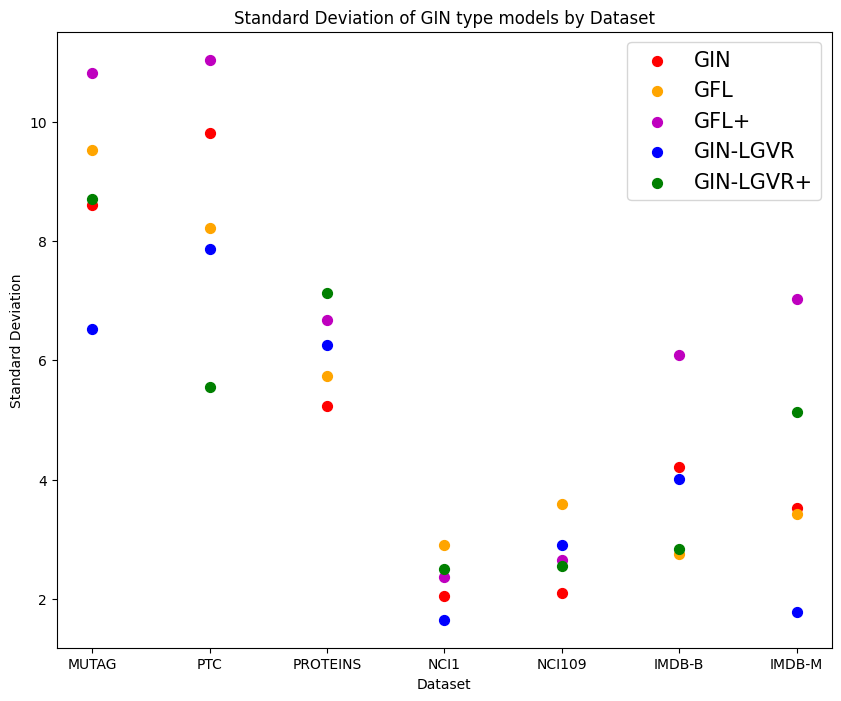}
    \includegraphics[height=0.4\textwidth]{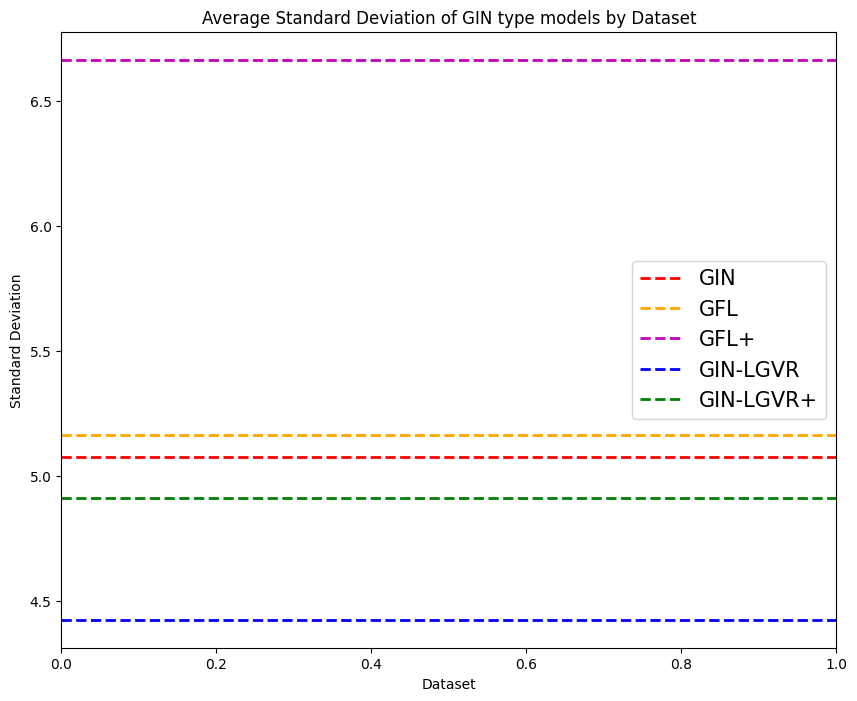}
    \caption{Standard deviations of GIN type models: {\rm GIN}, {\rm GFL}, {\rm GFL}$^{+}$, {\rm GIN-LVGR}, and {\rm GIN-LVGR}$^{+}$. (Left) Each point represents the standard deviation of $10$-fold performances (Table~\ref{tab:classification}) for each model and dataset. (Right) The horizontal dashed line represents the average of the overall standard deviation for each model.}
    \label{fig:stdev_fig}
\end{figure}

\smallskip
\textbf{Other Performance Metric: Mean of the maximum accuracy per fold. } As mentioned in Section~\ref{subsubsection:perf_analysis_class}, since there was no separate test set in graph classification benchmarks, we performed $10$-fold cross-validation and used Equation~\ref{eq:final_perf} as our performance metric. The reason why we use this metric is that it is best suited to evaluate model performance through cross-validation. Specifically, the key aspect of cross-validation is to provide performance estimation and hyperparameters (for example, learning rate, decay rate, epoch, etc) without a test set. However, to reduce the hyperparameter search space, the epoch is typically set to a fixed value, which may result in reduced reliability of performance estimation due to inappropriate epoch settings. To address this, we used Equation~\ref{eq:final_perf} as our final performance metric. Equation~\ref{eq:final_perf} calculates performance estimation for all epochs, preventing a decrease in performance reliability due to inappropriate epoch settings and providing the optimal epoch, which is consistent with the purpose of cross-validation.

However, in addition to Equation~\ref{eq:final_perf}, the average of the maximum validation performance for each fold is also frequently used: 
\begin{equation} \label{eq:max_perf}
    \frac{1}{10} \cdot \sum_{k=1}^{10} (\max_{i \in \{1,\dots, t\}} P_{k,i}),
\end{equation}
where $t$ is the total epochs and $P_{k,i}$ is the $k$-fold validation accuracy at $i$-th epoch. Therefore, we also provide the classification results measured by Equation~\ref{eq:max_perf} in Table~\ref{tab:max_perf_classification}. 

Note that there is a significant difference between the values of Table~\ref{tab:classification} with Equation~\ref{eq:final_perf} and those of Table~\ref{tab:max_perf_classification} with Equation~\ref{eq:max_perf}. Moreover, there is a slight variation in the rankings between models for some datasets. In particular, ${\rm GFL}^{+}$ showed a significant ranking improvement for some datasets (MUTAG, IMDB-B, IMDB-M, for example) when measured by Equation~\ref{eq:max_perf}, due to its high variations as can be seen in Table~\ref{tab:classification_stdev} and Figure~\ref{fig:stdev_fig}. However, there is no change in the model that shows the best performance for each dataset. In other words, even when using Equation~\ref{eq:max_perf} as the performance metric, we confirm that our models, {\rm GIN-LVGR} and {\rm GIN-LVGR}$^{+}$, still show the best performance for all datasets.

\begin{table*}[t] 
	\centering
	\caption{Graph classification results (with Equation~\ref{eq:max_perf})}
	\label{tab:max_perf_classification}
    \scalebox{0.78}{
    \begin{tabular}{|c| c c c c c c c |}
		\hline
  	Model / Dataset & MUTAG  & PTC   & PROTEINS  & NCI1  & NCI109 & IMDB-B & IMDB-M    \\ \hline	
        GIN & 88.33     & 70.29     & 74.5      & 72.31        &  71.97  & 77.3 & 50.0       \\ 
        {GFL} &  92.22    &  71.47    &   76.84    & 73.23    & 72.88  & 75.2  & 49.6  \\    
        {GFL}$^{+}$ &  91.11    &   69.41   & 73.96    &   73.28    & 72.62  & 77.4   &  51.0 \\    
        \textbf{{GIN-LGVR}} & \bf{92.78}   &  70.29    & \bf{77.3}      &   72.34      &  71.77   &  76.7  &  50.07    \\     
        \textbf{{GIN-LGVR}$^{+}$} & 91.11    &  \bf{72.64}    &  74.68     &  \bf{73.94}       &   \bf{73.45}  &  \bf{77.8} &  \bf{52.0}  \\ \hline
	\end{tabular}
	}
\end{table*}

\subsection{Graph Regression Results} \label{subsection:regression}

\begin{table*}[t]
    \centering
    \caption{Graph regression results on the QM9 dataset (with mean absolute error).}
    \label{tab:regression}
    \scalebox{0.7}{
    \begin{tabular}{|c| ccccc|cc|}
        \hline
          & \multicolumn{5}{c|}{$\mathcal{C}$={\rm GIN}} & \multicolumn{2}{c|}{$\mathcal{C}$={\rm PPGN}} \\ \hline 
        Target / Model & GIN & {GFL} & {GFL}$^{+}$ & \textbf{{GIN-LGVR}} & \textbf{{GIN-LGVR}$^{+}$} & PPGN & \textbf{{PPGN-LGVR}$^{+}$} \\
        \hline
        $\mu$    & 0.729 & 0.917 & \bf{0.655} & 1.058 & 0.661 & 0.231 & \bf{0.09} \\
        $\alpha$ & 3.435 & 4.818 & 2.985  & 3.542 & \bf{2.76}  & 0.382 & \bf{0.19} \\
        $\epsilon_{homo}$ & 0.00628 & 0.01595 & \bf{0.00581} & 0.01528 & 0.00683 & 0.00276 & \bf{0.00178} \\
        $\epsilon_{lumo} $& 0.00957 & 0.02356 & 0.00987 & 0.02952 & \bf{0.00911} & 0.00287 & \bf{0.0019} \\
        $\Delta_\epsilon $& 0.01023 & 0.01737 & 0.01003 & 0.03253 & \bf{0.00931} & 0.00406  & \bf{0.00253} \\
        $\langle R^2 \rangle$ & 124.05 & 175.23 & 121.33 & 174.96   & \bf{113.1} & 16.07 & \bf{3.47} \\
        $ZPVE  $& 0.00719 & 0.02354 & 0.00393 & 0.00534 & \bf{0.0026}  & 0.00064  &  \bf{0.00032} \\
        $U_0    $& 17.477  & 18.938 & 18.121 & 17.458  & \bf{15.705}  & 0.234 & \bf{0.216} \\
        $U    $& 17.477  & 18.881 & 18.12 & 17.807  &  \bf{15.706} & 0.234 & \bf{0.215} \\
        $H     $& 17.476 & 18.923 & 18.118  & 17.626  & \bf{15.705} & 0.229 & \bf{0.217} \\
        $G     $& 17.477 & 18.889 & 18.121  & 17.72  & \bf{15.708} & 0.238 & \bf{0.216} \\
        $C_v  $& 1.361  & 3.515 & 1.258 & 1.993  &  \bf{1.163} & 0.184 & \bf{0.081} \\
        \hline
    \end{tabular}
    }
\end{table*}

We evaluate our models on the QM9 dataset, which involves predicting 12 numeric quantities for a given molecular graph. The dataset is split into 80\% train, 10\% validation, and 10\% test. Finally, we use the same network from the classification experiments and compare the performances of models by training a single network to predict all 12 quantities simultaneously. 


Table \ref{tab:regression} presents the test mean absolute error for both our models ({\rm GIN-LGVR}, {\rm GIN-LVGR}$^{+}$, {\rm PPGN-LVGR}$^{+}$) and comparison models ({\rm GIN}, {\rm GFL}, {\rm GFL}$^{+}$, {\rm PPGN}). These results provide several interesting findings. First, {\rm GIN-LVGR}$^{+}$ and {\rm PPGN-LVGR}$^{+}$, show the best performances for almost all 12 quantities, which empirically demonstrates the superior performance of our model for regression task as well. This validates \textbf{Claim 1}. Next, regardless of filtration types, both {\rm GFL} and {\rm GIN-LGVR}, which do not use pooling information, show a decrease in performance compared to {\rm GIN}. As mentioned in the classification experiment, we remark that such decreases in performances stem from the fact that the initial node features of molecule graphs in QM9 are highly informative (Appendix~\ref{app:qm9_dataset}). However, we found that {\rm GIN-LVGR}$^{+}$ outperforms {\rm GIN} and {\rm GIN-LVGR} for all 12 quantities. This empirically validates the effectiveness of our integration framework technique (Section~\ref{subsection:integration}; Corollary~\ref{cor:union}), which validates \textbf{Claim 3}. Finally, to verify \textbf{Claim 2}, we compare our models with node filtration methodologies for the GIN type and show that {\rm GIN-LVGR}$^{+}$ outperforms node filtration-based models ({\rm GFL}, {\rm GFL}$^{+}$) on almost all tasks ($10$ out of $12$ tasks). Through these results, we empirically demonstrate the superiority of our edge filtration-based approach over node filtration-based one for regression tasks as well (\textbf{Claim 2}).

\section{Conclusion} \label{section:conclusion}

We propose a novel edge filtration-based persistence diagram, named Topological Edge Diagram (TED), which can incorporate topological information into any message passing graph neural networks. We mathematically prove that TED can preserve the node embedding information as well as contain additional topological information. We further prove that TED can even strictly increase the expressivity of the WL test. To implement our theoretical foundation, we propose a novel neural network-based algorithm, called Line Graph Vietoris-Rips (LGVR) Persistence Diagram and prove that LGVR has the same expressivity as TED. To evaluate the performance of LGVR, we propose two model frameworks ({$\mathcal{C}$-LVGR} and {$\mathcal{C}$-LVGR}$^{+}$) that can be applied to any message passing GNNs $\mathcal{C}$, and prove that they are strictly more powerful than the WL test. Through our model frameworks, we empirically demonstrate the superior performances of our approach. The downside is that since the ENC only extracts node coloring information, applying the {$\mathcal{C}$-LVGR} would result in losing all non-node coloring information. Although we address this problem by introducing the integration technique in Section~\ref{subsection:integration}, we believe that investigating ENC to prevent this information loss would be an interesting future work.

{\vskip 0.3in\noindent{\large\bf Acknowledgments}\vskip 0.2in\noindent}{The authors would like to thank the Action Editor and anonymous reviewers for their careful reading of the paper and useful suggestions to help improve the exposition of the paper.}




\appendix

\begin{section} {Purpose of extension to $(L_{K_G}, \widetilde{\mathcal{C}^{\phi}})$ in Algorithm~\ref{algo:lgvr_algo}} \label{subsection:extension_to_complete}

In this section, we will explain the reason why we extend a colored line graph $(L_G, \mathcal{C}^{\phi})$ to a colored complete line graph $(L_{K_G}, \widetilde{\mathcal{C}^{\phi}})$. In short, it is done to extract rich topological information about graph $G$ by performing binary node classification on the line graph to distinguish actual edges and virtual edges of $G$.

Suppose that we perform node classification on $(L_G, \mathcal{C}^{\phi})$ instead of $(L_{K_G}, \widetilde{\mathcal{C}^{\phi}})$ with the loss $\mathcal{L}_{\rm LGVR}$ in Algorithm~\ref{algo:lgvr_algo}. Then all the nodes will end up being trained to have a value of $0$ since all the nodes in $L_G$ correspond to actual edges in $G$. This means that all edge values of $G$ will be sufficiently close to $0$ (that is, they are all similar), which can act as a bottleneck in obtaining rich topological information, as the persistence of all homology classes of ${\rm ph}_{\rm VR}^{i}(G, ef^{\mathcal{C}})$ becomes excessively short. To avoid this issue, we add virtual edges to $G$ using the complete graph $K_G$ and perform the node classification on $(L_{K_G}, \widetilde{\mathcal{C}^{\phi}})$ instead of $(L_G, \mathcal{C}^{\phi})$.

\end{section}

\begin{section} {A remark on the range $[0, 0.5]$ of $\varepsilon$ for $\{{\rm VR}^{\varepsilon}_{1}(V(G), A_{G}^{\mathcal{C}})\}_{\varepsilon \in [0, 0.5]}$} \label{subsubsection:range_of_varepsilon}
In this section, we explain why the range of $\varepsilon$ is set to $[0, 0.5]$ in the definition of ${\rm LGVR}(G, \mathcal{C})$: $({\rm ph}^{0}(\{{\rm VR}_{k}^{\varepsilon}(V(G), \text{ } A_{G}^{\mathcal{C}})\}_{\varepsilon \in [0, 0.5]}),\dots, {\rm ph}^{k}(\{{\rm VR}_{k}^{\varepsilon}(V(G), A_{G}^{\mathcal{C}})\}_{\varepsilon \in [0, 0.5]}))$. Since we assign scalar values to edges through $A_{G}^{\mathcal{C}}$ and use them as distances between nodes, the range of $\varepsilon$ determines which edges are significant and used to extract the topological information of the graph. Hence, to set an appropriate range for $\varepsilon$, it is necessary to interpret the values of the distance matrix $A_{G}^{\mathcal{C}}$, which contains information about the edges. The values in $A_{G}^{\mathcal{C}}$ range from 0 to 1, and meaningful (or actual) edges have a value close to 0, while meaningless (or virtual) edges have a value close to 1. Therefore, we determine that meaningful edges have values of 0.5 or less, so we use this to extract the topological information of the graph by setting the range of $\varepsilon$ to $[0, 0.5]$.
\end{section}

\begin{section} {Our Set Encoder} \label{subsection:set_encoder}

To extract the representation of LGVR Diagram, we use both Deep Set (\cite{deepset}) and Set Transformer (\cite{settransformer}). Deep Set has a simple architecture and has been theoretically proven to be a universal approximator of set functions under a countable universe (\cite[Theorem 2]{deepset}). However, Deep Set operates independently on each element of the set, which leads to the disadvantage of discarding all information related to interactions between elements. Inspired by (\cite{transformer}), Lee \etal \cite{settransformer} proposed the Set Transformer, to reflect higher-order interactions between elements in a set. 

Our set encoder is constructed by combining these two models: Deep Set and Set Transformer. Briefly speaking, we first apply Deep Set and then pass the results to Set Transformer. We expect to reflect better element-wise interactions within the set by passing higher-quality feature vectors through the Deep Set to Set Transformer.

\begin{figure}[t]
    \centering
    \includegraphics[width=1.0\textwidth]{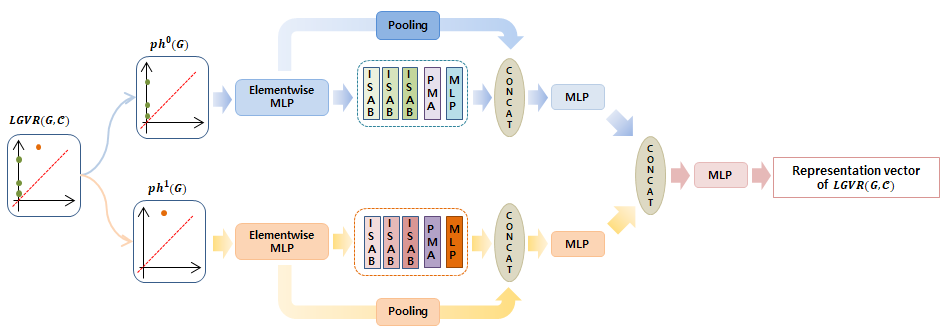}
    \caption{An architecture of our set encoder. Note that Elementwise MLP means performing MLP on each element in the multi-set, and ISAB and PMA respectively denote Induced Set Attention Block and Pooling by Multihead Attention (\cite{settransformer}). After passing the 0-th and 1-st persistence diagrams through Elementwise MLPs, their outputs are not only used as inputs for Set Transformers (\cite{settransformer}) but also used to extract the representation vectors of Deep Set (\cite{deepset}) through pooling (for example, sum). Through this skip connection scheme, our set encoder takes on a form that combines Deep Set and Set Transformer.}
    \label{fig:set_encoder}
\end{figure}

Here we will describe our set encoder architecture in detail. For notational convenience, let 
\begin{equation*}
{\rm ph}^{i}(G):={\rm ph}^{i}(\{{\rm VR}_{1}^{\varepsilon}(V(G), \text{ } A_{G}^{\mathcal{C}})\}_{\varepsilon \in [0, 0.5]})
\end{equation*} 
for $i=0, 1$. Since multi-sets $ph^0(G)$ and $ph^1(G)$ are independent of each other, we construct each set encoder and concatenate them. First, we construct the element-wise encoding part of Deep Set using multi-layer perceptrons ${\rm MLP}^{0}_{0}$ and ${\rm MLP}^{0}_{1}$ with two layers: for $i=0, 1$, 
\begin{equation*}
{\rm DE}_i(G):= {\rm MLP}^{0}_{i}({\rm ph}^{i}(G)),
\end{equation*}
where 
\begin{align*}
{\rm ph}^{i}(G)&=\{\{x_1,\dots, x_g\}\} \text{, and} \\
{\rm MLP}_{i}({\rm ph}^{i}(G))={\rm MLP}_{i}(\{\{&x_1,\dots, x_g\}\}):=\{\{{\rm MLP}_{i}(x_1), \dots, {\rm MLP}_{i}(x_g)\}\}
\end{align*}


Next, we construct Set Transformers ${\rm ST}_0$ and ${\rm ST}_1$ taking multi-sets ${\rm DE}_0(G)$ and ${\rm DE}_1(G)$ as their inputs. For each $i=0,1$, the encoder of each ${\rm ST}_i$ is composed of three layers of Induced Set Attention Block (ISAB) ${\rm ISAB}_{i}^{j}$, $j=0,1,2$, and the decoder is composed of a single layer of Pooling by Multi-head Attention (PMA) ${\rm PMA}_i$ and fully connected layer ${\rm FC}_i$: for each $i=0,1$, 
\begin{align*}
    (\text{Encoder of ST}_{i})({\rm DE}_i(G))&:={\rm ISAB}_{i}^{2}({\rm ISAB}_{i}^{1}({\rm ISAB}_{i}^{0}({\rm DE}_i(G)))), \\
    {\rm ST}_{i}({\rm DE}_i(G))&:={\rm FC}_{i}({\rm PMA}_{i}(\text{Encoder of ST}_{i})({\rm DE}_i(G))).
\end{align*}
Now, for each $i=0,1$, we define a new vector ${\rm DST}_{i}(G)$ by concatenating the result of Deep Set, $\sum_{x \in {\rm DE}_{i}(G)} x$, with the result of Set Transformer, ${\rm ST}_{i}({\rm DE}_i(G))$, and then passing it through an additional multi-layer perceptrons ${\rm MLP}^{1}_{i}$: for each $i=0, 1$, 
\begin{equation*}
{\rm DST}_{i}(G):={\rm MLP}^{1}_{i}([\sum_{x \in {\rm DE}_{i}(G)} x \text{ $|$ } {\rm ST}_{i}({\rm DE}_i(G))]). 
\end{equation*}
Finally, we extract the embedding vector for the {\rm LGVR} diagram by concatenating ${\rm DST}_{0}(G)$ and ${\rm DST}_{1}(G)$, and passing it through the last multi-layer perceptrons ${\rm MLP}^{2}$:
\begin{equation*}
{\rm SetEncoder}(G):={\rm MLP}^{2}([{\rm DST}_{0}(G) \text{ $|$ } {\rm DST}_{1}(G)]). 
\end{equation*}
By combining the Deep Set and Set Transformer as described above, we construct our set encoder. Note that Figure \ref{fig:set_encoder} depicts the overall architecture of our set encoder. 

\end{section}

\begin{section}{Experimental Details}

In this section, we explain some details of datasets and hyperparameter settings used in the experiments (Section~\ref{section:experiments}).

\begin{subsection}{Details of Datasets} \label{app:dataset}

We give detailed descriptions of datasets used in our experiments:  MUTAG, PTC, PROTEINS, NCI1, NCI109, IMDB-B, IMDB-M, and QM9. 


\begin{table}[t] 
	\centering
    \caption{Details of datasets for graph classification task. Note that features (5th column) refer to the number of classes of the initial node feature. When the feature is NA, it indicates that there is no initial node feature so the degree is used as the initial node feature. Moreover, \#classes (6th column) means the number of classes of labels.}
    \label{tab:dataset_classification}
    \scalebox{0.85}{
        \begin{tabular}{|l| c c c c c|}
    	    \hline
      		Dataset & \#graphs  & avg. \#nodes & avg. \#edges & features & \#classes \\ \hline
      		MUTAG & 188 & 17.9 & 19.79 & 7 & 2 \\
      		PTC & 344 & 14.29  & 14.69 & 22 & 2 \\
      		PROTEINS & 1113 & 39.06  & 72.82 & 3 & 2 \\
      		NCI1 & 4110 & 29.87  & 32.3 & 37 & 2 \\
      		NCI109 & 4127 & 4 & 29.6 & 38 & 2 \\
                IMDB-B & 1000 & 19.77  & 96.53 & NA & 2 \\ 
      		IMDB-M & 1500 & 13  & 65.94 & NA & 3 \\ \hline
        \end{tabular}
    }
\end{table}

\begin{subsubsection}{Bioinformatics datasets} \label{app:bio}

In bioinformatic graphs, the nodes have categorical input features, and we test 5 datasets in our experiments: MUTAG, PTC, PROTEINS, NCI1, and NCI109. MUTAG is a dataset of 188 mutagenic aromatic and heteroaromatic nitro compounds. PTC is consisted of 344 chemical compounds that reports the carcinogenicity for male and female rats. Moreover, PROTEINS is a dataset whose nodes are secondary structure elements and there is an edge between two nodes if they are neighborhoods in the amino-acid sequence or in 3D space. Finally, NCI1 and NCI109, made publicly available by the National Cancer Institute, are two subsets of balanced datasets of chemical compounds screened for ability to suppress or inhibit the growth of a panel of human tumor cell lines. Statistics for these datasets are summarized in Table \ref{tab:dataset_classification}. 

\end{subsubsection}

\begin{subsubsection}{Social network datasets} \label{app:social}

In social network graphs, no features are provided for nodes so we set all node features to be the same (thus, node features are uninformative). We test two social network datasets in our experiments: IMDB-B (binary) and IMDB-M (multi-class). Both datasets are movie collaboration datasets. Each graph provides information about actors/actresses and genres of different movies on IMDB. For each graph, nodes correspond to actors/actresses and an edge is drawn between actors/actresses who appear in the same movie. Each graph corresponds to a specific genre label, and the task is to classify which genre a given graph belongs to. IMDB-B consists of collaboration graphs on \textit{Action} and \textit{Romance} genres, and IMDB-M is a multi-class version of IMDB-B derived from \textit{Comedy}, \textit{Romance}, and \textit{Sci-Fi} genres. Statistics for these datasets are summarized in Table \ref{tab:dataset_classification}. 

\end{subsubsection}

\begin{subsubsection}{QM9 dataset} \label{app:qm9_dataset}

QM9 is the dataset consisting of 134k small organic molecules of varying sizes from 4 to 29 atoms. Each graph is represented by an adjacency matrix and input node features, which can be obtained from the pytorch-geometric library \cite{qm9-3}. Note that the input node features are of dimension $18$, which contain information about the distance between atoms, categorical data on the edges, etc.

This dataset has three characteristics: (1) nodes correspond to atoms and edges correspond to close atom pairs, (2) edges are purely distance-based, and (3) it only provides the coordinates of atoms and atomic numbers as node features. Moreover, the number of classes of initial node features is $18$, and the task is to predict 12 real-valued physical quantities for each molecule graph. We provide a brief description of each regression target in Table \ref{tab:qm9_target}. A more detailed explanation of QM9 dataset can be found in \cite{qm9-5}.

\begin{table*}[t]
    \centering
    \caption{A brief description of each regression target on QM9 dataset.}
    \label{tab:qm9_target}
    \scalebox{0.7}{
    \begin{tabular}{|l| c|}
        \hline
        Target & Description \\
        \hline
        $\mu$    &  Dipole moment  \\
        $\alpha$ & Isotropic polarizability \\
        $\epsilon_{homo}$ & Highest occupied molecular orbital energy \\
        $\epsilon_{lumo}$ & Lowest unoccupied molecular orbital energy \\
        $\Delta_\epsilon$ & Gap between $\epsilon_{homo}$ and $\epsilon_{lumo}$ \\
        $\langle R^2 \rangle$ &  Electronic spatial extent \\
        $ZPVE$ &  Zero point vibrational energy \\
        $U_0$ &  Internal energy at 0K \\
        $U$ &  Internal energy at 298.15K  \\
        $H$ &  Enthalpy at 298.15K  \\
        $G$ &   Free energy at 298.15K   \\
        $C_v$ &  Heat capacity at 298.15K   \\
        \hline
    \end{tabular}
    }
\end{table*}

\end{subsubsection}

\end{subsection}

\begin{subsection} {Details of Hyperparameter Settings} \label{subsection:hyperparameter}

The hyperparameters we tune for each dataset are: (1) learning rate (LR), (2) decay rate (DR), (3) batch size (BS), and (4) the number of epochs (Ep). For classification, the search space for each hyperparameter is as follows: when $\mathcal{C}$ is GIN, the learning rate is set to $\{10^{-3}, 5 \cdot 10^{-4}, 10^{-4}\}$ and when $\mathcal{C}$ is PPGN, they are set to $\{10^{-4}, 5 \cdot 10^{-5}\}$. Moreover, for regression, in the case of PPGN, it maintains the same setting as classification, whereas in the case of GIN, the learning rate is set to $\{5 \cdot 10^{-3}, 10^{-3}, 5 \cdot 10^{-4}, 10^{-4}, 5 \cdot 10^{-5}\}$. In both cases, we use the Adam optimizer (\cite{adam}) and decay the learning rate by $\{0.5, 0.75, 1.0\}$ every $20$ epochs. 

Among the search space for hyperparameter, we summarize the hyperparameter settings for each dataset used in our experiments in Table \ref{tab:hyperparameter_gin} and \ref{tab:hyperparameter_ppgn}.  

\begin{table}[t] 
	\centering
    \caption{GIN Type: Details of hyperparameter settings used in our experiments for graph classification and regression tasks.}
    \label{tab:hyperparameter_gin}
	\scalebox{0.58}{
    \begin{tabular}{|l| llll| llll| llll| llll| llll|}
	    \hline 
            & \multicolumn{4}{c|}{{\rm GIN}} & \multicolumn{4}{c|}{{\rm GIN-LVGR}} & \multicolumn{4}{c|}{{\rm GIN-LVGR}$^{+}$} & \multicolumn{4}{c|}{{\rm GFL}} & \multicolumn{4}{c|}{{\rm GFL}$^{+}$} \\ \hline
            Dataset & LR & DR & BS & Ep & LR & DR & BS & Ep & LR & DR & BS & Ep & LR & DR & BS & Ep & LR & DR & BS & Ep \\ \hline
  		MUTAG & $10^{-3}$ & 0.75 & 5 & 500 & $5*10^{-4}$ & 0.75 & 5 & 500 & $5*10^{-4}$  & 0.75 & 5 & 500 & $5*10^{-4}$ & 0.75 & 5 & 500 &  $10^{-3}$ & 0.75 & 5 & 500 \\
  		PTC & $10^{-3}$ & 0.75 & 5 & 400 & $10^{-3}$  & 0.75 & 5 & 400 &  $5*10^{-4}$  & 0.75 & 5 & 400 &  $5*10^{-4}$ & 0.75 & 5 & 400 &  $5*10^{-4}$  & 0.75 & 5 & 400\\
  		PROTEINS & $10^{-3}$ & 0.75 & 5 & 400 & $10^{-3}$ & 0.75 & 5 & 400 &  $10^{-3}$  & 0.75 & 5 & 400 & $5*10^{-4}$  & 0.75 & 5 & 400 & $10^{-3}$  & 0.75 & 5 & 400 \\
  		NCI1 & $5*10^{-4}$ & 0.75 & 5 & 200 &  $5*10^{-4}$ & 0.75 & 5 & 200 & $5*10^{-4}$  & 0.75 & 5 & 200 & $10^{-3}$  & 0.75 & 5 & 200 & $10^{-3}$  & 0.75 & 5 & 200 \\
            NCI109 & $5*10^{-4}$  & 0.75 & 5 & 250 & $5*10^{-4}$  & 0.75 & 5 & 250 & 
        $10^{-3}$ & 0.75 & 5 & 250  &  $5*10^{-4}$  & 0.75 & 5 & 250 & $5*10^{-4}$  & 0.75 & 5 & 250  \\
  		IMDB-B & $10^{-3}$ & 0.75 & 5 & 150 & $10^{-4}$ & 0.75 & 5 & 150 &  $10^{-3}$   & 0.75 & 5 & 150 &  $10^{-3}$ & 0.75 & 5 & 150 & $10^{-3}$  & 0.75 & 5 & 150   \\ 
  		IMDB-M & $10^{-3}$ & 0.75 & 5 & 150 & $10^{-4}$ & 0.75 & 5 & 150 &  $10^{-3}$   & 0.75 & 5 & 150 &  $10^{-3}$ & 0.75 & 5 & 150 & $10^{-3}$  & 0.75 & 5 & 150   \\ 
  		QM9 & $5*10^{-3}$ & 0.8 & 64 & 300 & $5*10^{-5}$ & 0.8 & 64 & 300 &  $10^{-3}$  & 0.8 & 64 & 300 & $5*10^{-4}$ & 0.8 & 64 & 300 & $5 * 10^{-3}$ & 0.8 & 64 & 300  \\ \hline
    \end{tabular}
    }
\end{table}

\begin{table}[t] 
	\centering
    \caption{PPGN Type: Details of hyperparameter settings used in our experiments for graph classification and regression tasks.}
    \label{tab:hyperparameter_ppgn}
    \scalebox{0.7}{
	\begin{tabular}{|l| llll| llll|}
	    \hline 
            & \multicolumn{4}{c|}{{\rm PPGN}} & \multicolumn{4}{c|}{{\rm PPGN-LVGR}$^{+}$} \\ \hline
            Dataset & LR & DR & BS & Ep & LR & DR & BS & Ep \\ \hline
  		MUTAG & $10^{-4}$ & 1.0 & 5 & 500 & $10^{-4}$ & 1.0 & 5 & 500 \\
  		PTC & $10^{-4}$ & 1.0 & 5 & 400 & $5*10^{-5}$ & 1.0 & 5 & 400 \\
  		PROTEINS & $10^{-3}$ & 0.5 & 5 & 400 & $5*10^{-5}$ & 0.5 & 5 & 400 \\
  		NCI1 & $10^{-4}$ & 0.75 & 5 & 200 & $5*10^{-5}$ & 0.75 & 5 & 200 \\
            NCI109 & $10^{-4}$ & 0.75 & 5 & 250 & $5*10^{-5}$ & 0.75 & 5 & 250 \\
  		IMDB-B & $5 * 10^{-5}$ & 0.75 & 5 & 150 & $5*10^{-5}$ & 0.75 & 5 & 150 \\ 
  		IMDB-M & $10^{-4}$ & 0.75 & 5 & 150 & $5*10^{-5}$ & 0.75 & 5 & 150 \\ 
  		QM9 & $10^{-4}$ & 0.8 & 64 & 300 & $10^{-4}$ & 0.8 & 64 & 300 \\ \hline
    \end{tabular}
    }
\end{table}

\end{subsection}

\end{section}

\vskip 0.2in
\bibliography{./reference}

\end{document}